\newtheorem{theorem}{Theorem}
\title{The Surprising Power of Graph Neural Networks with\\ Random Node Initialization}
\author{
Ralph Abboud$^1$
\and
{\.I}smail {\.I}lkan Ceylan$^1$\and
Martin Grohe$^{2}$\And
Thomas Lukasiewicz$^1$
\affiliations
$^1$University of Oxford\\
$^2$RWTH Aachen University\\
}
\begin{document}
\maketitle

\begin{abstract}
Graph neural networks (GNNs) are effective models for representation learning on relational data. 
However, standard GNNs are limited in their expressive power, as they cannot distinguish graphs beyond the capability of the Weisfeiler-Leman graph isomorphism heuristic. 
In order to break this expressiveness barrier, GNNs have been enhanced with \emph{random node initialization (RNI)}, where the idea is to train and run the models with randomized initial node features. 
In this work, we analyze the expressive power of GNNs with RNI, and prove that these models are \emph{universal}, a first such result for GNNs not relying on computationally demanding higher-order properties.
This universality result holds even with partially randomized initial node features, and preserves the \emph{invariance} properties of GNNs in expectation.
We then empirically analyze the effect of RNI on GNNs, based on carefully constructed datasets. 
Our empirical findings support the superior performance of GNNs with RNI over standard GNNs.
\end{abstract}

\section{Introduction}
Graph neural networks (GNNs) \cite{Scarselli09,Gori2005} are neural architectures designed for learning functions over graph domains, and naturally encode desirable properties such as permutation invariance (resp., equivariance) relative to graph nodes, and node-level computation based on message passing. These properties provide GNNs with a strong inductive bias, enabling them to effectively learn and combine both local and global graph features \cite{BattagliaGraphNetworks}. GNNs have been applied to a multitude of tasks, ranging from protein classification \cite{GilmerSRVD17} and synthesis \cite{YouLYPL18}, protein-protein interaction \cite{FoutBSB17}, and social network analysis \cite{HamiltonYL17}, to recommender systems \cite{YingHCEHL18} and combinatorial optimization \cite{BengioTourDHorizon}. %

While being widely applied, popular GNN architectures, such as message passing neural networks (MPNNs), are limited in their expressive power. Specifically, MPNNs are at most as powerful as the Weisfeiler-Leman (1-WL) graph isomorphism heuristic \cite{MorrisAAAI19,Keyulu18}, and thus cannot discern between several families of non-isomorphic graphs, e.g., sets of regular graphs \cite{CaiFI92}. 
To address this limitation, alternative GNN architectures with provably higher expressive power, such as $k$-GNNs~\cite{MorrisAAAI19} and invariant (resp., equivariant) graph networks~\cite{MaronInvEqui19}, have been proposed. These models, which we refer to as \emph{higher-order GNNs}, are inspired by the generalization of 1-WL to $k-$tuples of nodes, known as $k$-WL  \cite{CaiFI92}. 
While these models are very expressive, they are computationally very demanding. As a result, MPNNs, despite their limited expressiveness, remain the standard for graph representation learning. 

In a rather recent development, MPNNs have achieved empirical improvements using \emph{random node initialization} (RNI), in which initial node embeddings are randomly set. Indeed, RNI enables MPNNs to detect \emph{fixed} substructures,  so extends their power beyond 1-WL, and also allows for a better approximation of a class of combinatorial problems \cite{SatoRandom2020}. While very important, these findings do not explain the overall theoretical impact of RNI on GNN learning and generalization for \emph{arbitrary} functions. 

In this paper, we thoroughly study the impact of RNI on MPNNs. Our main result states that MPNNs enhanced with RNI are \emph{universal}, and thus can approximate every function defined on graphs of any fixed order. This follows from a logical characterization of the expressiveness of MPNNs \cite{BarceloKM0RS20} combined with an argument on order-invariant definability. 
Importantly, MPNNs enhanced with RNI preserve the \emph{permutation-invariance} of MPNNs in expectation, and possess a strong inductive bias. 
Our result strongly contrasts with 1-WL limitations of deterministic MPNNs, and provides a foundation for developing expressive and memory-efficient MPNNs with strong inductive bias.

To verify our theoretical findings, we carry out a careful empirical study. We design \EXP, a synthetic dataset requiring 2-WL expressive power for models to achieve above-random performance, and run MPNNs with RNI on it, to observe \emph{how well} and \emph{how easily} this model can learn and generalize. Then, we propose \CEXP, a modification of \EXP with partially 1-WL distinguishable data, and evaluate the same questions in this more variable setting. Overall, the contributions of this paper are as follows:
\begin{enumerate}[-,leftmargin=*]

	\item We prove that MPNNs with RNI are universal, while being permutation-invariant in expectation. This is a significant improvement over the 1-WL limit of standard MPNNs and, to our knowledge, a first universality result for  memory-efficient GNNs.
	
	\item We introduce two carefully designed datasets, \EXP and \CEXP, based on graph pairs only distinguishable by 2-WL or higher, to rigorously evaluate the impact of RNI.
	
	\item  We analyze the effects of RNI on MPNNs on these datasets, and observe that (i)~MPNNs with RNI closely match the performance of higher-order GNNs, (ii)~the improved performance of MPNNs with RNI comes at the cost of slower convergence, and (iii)~partially randomizing initial node features improves model convergence and accuracy.
	\item We additionally perform the same experiments with analog sparser datasets, with longer training, and observe a similar behavior, but more volatility.
\end{enumerate}
The proof of the main theorem, as well as further details on datasets and experiments, can be found in the appendix of this paper.

\section{Graph Neural Networks}

Graph neural networks (GNNs) \cite{Gori2005,Scarselli09} are neural models for learning functions over graph-structured data.
In a GNN, graph nodes are assigned vector representations, which are updated iteratively through series of \emph{invariant} or \emph{equivariant} computational layers. 
Formally, a function $f$ is \emph{invariant} over graphs if, for isomorphic graphs $G,H\,{\in}\,\CG$ it holds that $f(G)\,{=}\,f(H)$. Furthermore, a function $f$ mapping a graph $G$ with vertices $V(G)$ to vectors ${\boldsymbol x\in\Rbb^{|V(G)|}}$ is \emph{equivariant} if,  for every permutation $\pi$ of $V(G)$, it holds that ${f(G^\pi)=f(G)^\pi}$.

\subsection{Message Passing Neural Networks} 
In MPNNs \cite{GilmerSRVD17}, node representations aggregate \emph{messages} from their neighboring nodes, and use this information to iteratively update their representations. Formally, given a node $x$, its vector representation $v_{x,t}$ at time $t$, and its
neighborhood $N(x)$, an update can be written as:
\[
v_{x,t+1} = combine \Big(v_{x,t},aggregate\big(\{v_{y,t}|~y \in N(x)\}\big)\Big),
\]
where \emph{combine} and \emph{aggregate} are functions, and \emph{aggregate} is typically permutation-invariant. 
Once message passing is complete, the final node representations are then used to compute target outputs. Prominent MPNNs include graph convolutional networks~(GCNs)~\cite{Kipf16} and graph attention networks~(GATs)~\cite{VelickovicCCRLB18}.

It is known that standard MPNNs have the same power as the 1-dimensional Weisfeiler-Leman algorithm (1-WL) \cite{Keyulu18,MorrisAAAI19}. This entails that graphs (or nodes) cannot be distinguished by MPNNs if $1$-WL does not distinguish them. For instance, 1-WL cannot distinguish between the graphs $G$ and $H$, shown in Figure \ref{fig:indistinguishable}, despite them being clearly non-isomorphic. Therefore, MPNNs cannot learn functions with different outputs for $G$ and $H$. 

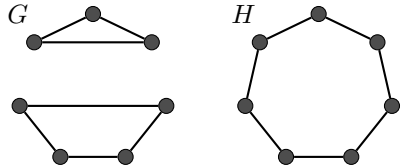
\begin{figure}[t!]
	\centering
	\begin{tikzpicture}[
	vertex/.style = {draw,fill=black!70,circle,inner sep=0pt, minimum height= 2mm},
	]
	\begin{scope}
	\node[vertex] (v0) at (90.00:1cm) {};
	\node[vertex] (v1) at (141.43:1cm) {};
	\node[vertex] (v2) at (192.86:1cm) {};
	\node[vertex] (v3) at (244.29:1cm) {};
	\node[vertex] (v4) at (295.71:1cm) {};
	\node[vertex] (v5) at (347.14:1cm) {};
	\node[vertex] (v6) at (398.57:1cm) {};
	\draw[thick] (v0) edge (v1) edge (v6) (v1) edge (v6) (v2) edge
	(v3) edge (v5) (v4) edge (v5) edge (v3);
	\node at (-1,1) {$G$};
	\end{scope}
	
	\begin{scope}[xshift=3cm]
	\node[vertex] (v0) at (90.00:1cm) {};
	\node[vertex] (v1) at (141.43:1cm) {};
	\node[vertex] (v2) at (192.86:1cm) {};
	\node[vertex] (v3) at (244.29:1cm) {};
	\node[vertex] (v4) at (295.71:1cm) {};
	\node[vertex] (v5) at (347.14:1cm) {};
	\node[vertex] (v6) at (398.57:1cm) {};
	\draw[thick] (v0) edge (v1) edge (v6) (v2) edge (v1) edge (v3)
	(v4) edge (v3) edge (v5) (v5) edge (v6);
	\node at (-1,1) {$H$};
	\end{scope}
	\end{tikzpicture}
	\caption{$G$ and $H$ are indistinguishable by $1$-WL} %
	\label{fig:indistinguishable}
\end{figure}
Another somewhat trivial limitation in the expressiveness of MPNNs is that information is only propagated along edges, and hence can never be shared between distinct connected components of a graph \cite{BarceloKM0RS20,Keyulu18}. 
An easy way to overcome this limitation is by adding \emph{global readouts}, that is, permutation-invariant functions that aggregate the current states of all nodes. Throughout the paper, we therefore focus on MPNNs with global readouts, referred to as \emph{ACR-GNNs} \cite{BarceloKM0RS20}. 

\subsection{Higher-order Graph Neural Networks}
We now present the main classes of higher-order GNNs.

\paragraph{Higher-order MPNNs.} The $k-$WL hierarchy has been directly emulated in GNNs, such that these models learn embeddings for \emph{tuples} of nodes, and perform message passing between them, as opposed to individual nodes. This higher-order message passing approach resulted in models such as $k$-GNNs \cite{MorrisAAAI19}, which have $(k-1)$-WL expressive power.\footnote{In the literature, different versions of the Weisfeiler-Leman algorithm have inconsistent dimension counts, but are equally expressive. For example, $(k+1)$-WL and $(k+1)$-GNNs in \cite{MorrisAAAI19} are equivalent to $k$-WL of \cite{CaiFI92,GroheWL}. We follow the latter, as it is the standard in the  literature on graph isomorphism testing.} These models need $O(|V|^k)$ memory to run, leading to \emph{excessive memory requirements}. 

\paragraph{Invariant (resp., equivariant) graph networks.} Another class of higher-order GNNs is invariant (resp., equivariant) graph networks \cite{MaronInvEqui19}, which represent graphs as a tensor, and implicitly pass information between nodes through invariant (resp., equivariant) computational blocks.  %
Following intermediate blocks, \emph{higher-order} tensors are typically returned, and the order of these tensors correlates directly with the expressive power of the overall model. Indeed, invariant networks \cite{MaronFSL19}, and later equivariant networks \cite{KerivenP19}, are shown to be universal, but with tensor orders of $O(|V|^2)$, where $|V|$ denotes the number of graph nodes. Furthermore, invariant (resp., equivariant) networks with intermediate tensor order $k$ are shown to be equivalent in power to $(k-1)$-WL \cite{MaronBSL19}, which is strictly more expressive as $k$ increases \cite{CaiFI92}. Therefore, universal higher-order models require \emph{intractably-sized intermediate tensors} in practice. 

\paragraph{Provably powerful graph networks.} A special class of invariant GNNs is provably powerful graph networks (PPGNs)\cite{MaronBSL19}. PPGNs are based  on ``blocks'' of multilayer perceptrons (MLPs) and matrix multiplication, which theoretically have 2-WL expressive power, and only require memory $O(|V|^2)$ (compared to $O(|V|^3)$ for 3-GNNs). However, PPGNs theoretically require \emph{exponentially many samples} in the number of graph nodes to learn necessary functions for 2-WL expressiveness \cite{PunyLRGA}.

\section{MPNNs with Random Node Initialization}
\label{sec:RNIUniversal}
We present the main result of the paper, showing that RNI makes MPNNs universal, in a natural sense. 
Our work is a first positive result for the universality of MPNNs. This result is not based on a new model, but rather on random initialization of node features, which is widely used in practice, and in this respect, it also serves as a theoretical justification for models that are empirically successful.

\subsection{Universality and Invariance}
It may appear somewhat surprising, and even counter-intuitive, that randomly initializing node features on its own would deliver such a gain in expressiveness.
In fact, on the surface, random initialization no longer preserves the invariance of MPNNs, since the result of the computation of an MPNN with RNI not only depends on the structure (i.e., the isomorphism type) of the input graph, but also on the random initialization.
The broader picture is, however, rather subtle, as we can view such a model as computing a random variable (or as generating an output distribution), and this random variable would still be invariant. 
This means that the outcome of the computation of an MPNN with RNI does still \emph{not} depend on the specific representation of the input graph, which fundamentally maintains invariance. Indeed, the mean of random features, in expectation, will inform GNN predictions, and is identical across all nodes, as randomization is i.i.d. However, the variability between different samples and the variability of a random sample enable graph discrimination and improve expressiveness. Hence, in expectation, all samples fluctuate around a unique value, preserving invariance, whereas sample variance improves expressiveness.

Formally, let $\mathcal G_n$ be the class of all $n$-vertex graphs, i.e., graphs that consist of at most $n$ vertices, and let ${f:\Gmc_n\to\Rbb}$. 
We say that a randomized function $\Xmc$ that associates with every graph $G\in \Gmc_n$ a random variable $\Xmc (G)$ is an \emph{$(\epsilon,\delta)$-approximation} of $f$ if for all $G\in\Gmc_n$ it holds that ${\Pr\big(|f(G)-\Xmc(G)|\le\epsilon\big)\ge 1-\delta}$. 
Note that an MPNN $\Nmc$ with RNI computes such functions $\Xmc$. If $\Xmc$ is computed by $\Nmc$, we say that \emph{$\Nmc$ $(\epsilon,\delta)$-approximates $f$}.
\begin{theorem}[Universal approximation]
\label{theo:uni}
	Let $n\ge 1$, and let $f:\mathcal G_n\to\mathbb R$ be invariant. Then, for all
	$\epsilon,\delta>0$, there is an MPNN with RNI that $(\epsilon,\delta)$-approximates $f$.
\end{theorem}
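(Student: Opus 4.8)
The plan is to exploit the finiteness of $\mathcal{G}_n$ together with the logical characterization of MPNNs with global readouts (ACR-GNNs) due to \cite{BarceloKM0RS20}, bridged by an order-invariant definability argument. First I would observe that, up to isomorphism, $\mathcal{G}_n$ contains only finitely many graphs $\tau_1,\dots,\tau_m$, so the invariant $f$ takes only finitely many values $f(\tau_1),\dots,f(\tau_m)$. It therefore suffices to construct a model that, with probability at least $1-\delta$, identifies the isomorphism type of the input graph and then emits a real value within $\epsilon$ of the corresponding $f(\tau_i)$; the global readout reduces the node-level computation to a single graph-level output, and a final MLP converts the detected type into the desired real number, which is where the $\epsilon$ slack is absorbed.

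Second, I would use RNI to break symmetry. Each vertex receives an i.i.d.\ random feature, and with probability at least $1-\delta$ (for a continuous law this event has probability $1$; for a discrete one a sufficiently fine range suffices) all features are pairwise distinct. On this event the features induce a linear order on the vertices, so the randomized model effectively operates on a \emph{linearly ordered} $n$-vertex graph.

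The crux --- and the step I expect to be the main obstacle --- is the order-invariant definability argument: I must show that the target invariant function is expressible in the logic captured by ACR-GNNs once a linear order is available, and that the value it computes is independent of which order is chosen. The idea is that, in two-variable first-order logic with counting, an order lets one name the $i$-th vertex (the unique vertex with exactly $i-1$ predecessors, a counting condition using only two variables) and hence read off each adjacency bit $E(i\text{-th},j\text{-th})$ via a formula such as $\exists x\,(\phi_i(x)\wedge\exists y\,(\phi_j(y)\wedge E(x,y)))$; since $n$ is fixed, a bounded-size formula recovers the whole adjacency matrix and thus the isomorphism type. Crucially, although the formula refers to the order, the quantity it ultimately computes --- the isomorphism type, and hence $f$ --- is \emph{order-invariant}, which is exactly what guarantees that the model's output distribution depends only on the isomorphism type of $G$, i.e.\ that invariance is preserved in expectation.

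Finally, I would assemble the pieces: by the characterization of \cite{BarceloKM0RS20}, the order-invariant formula is computed by some ACR-GNN, which I feed the random features, interpreting feature comparison as the order predicate. On the high-probability event that the features are distinct, this MPNN with RNI outputs $f(G)$ up to the MLP approximation error $\epsilon$, yielding an $(\epsilon,\delta)$-approximation. The remaining technical care lies in making the translation between the continuous random features and the relational logical argument rigorous, and in matching the probability of the ``all features distinct'' event to the prescribed $\delta$ while respecting any finite-precision requirements of the GNN.
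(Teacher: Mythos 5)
Your overall strategy matches the paper's: exploit the finiteness of $\mathcal G_n$, use RNI to break symmetry with probability $\ge 1-\delta$, invoke the result of \cite{BarceloKM0RS20} that every $\LC^2$-sentence is simulated by an ACR-GNN, and argue that the quantity ultimately computed is independent of the random tie-breaking. The reduction from real-valued to Boolean/finitely-valued functions at the end is also essentially the paper's final step (the paper writes $f$ as a linear combination of indicator functions $g_i$ of its finitely many values). However, the step you yourself flag as the crux contains a genuine gap. Your plan evaluates $\LC^2$-formulas over the vocabulary $\{E,<\}$, where $<$ is the linear order induced by comparing random features: you name the $i$-th vertex as ``the unique vertex with exactly $i-1$ predecessors'' and read off adjacency bits. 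But the characterization of \cite{BarceloKM0RS20} (Lemma~2 in the appendix) only covers $\LC^2$ in the language of \emph{colored graphs}, i.e., the edge relation plus \emph{unary} predicates encoded in the initial node states. An MPNN passes messages only along edges of $E$ and has a single permutation-invariant global readout; it has no mechanism for evaluating a counting quantifier over an auxiliary \emph{binary} relation such as $y<x$, since computing $|\{y: r_y<r_x\}|$ requires each node to compare its own value against every other node's value, which is neither local message passing nor a node-independent global aggregate. So ``interpreting feature comparison as the order predicate'' is precisely the step that does not follow from the cited lemma and would itself need a new simulation argument.

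The paper avoids this obstacle by never constructing an order at all: a purely local first layer applies a bank of $c\cdot n^2$ fixed thresholds (via a linearized sigmoid) to each node's random value, turning it into a Boolean vector that is, with probability $\ge 1-\delta$, a well-defined and \emph{distinct} tuple of unary colors (Lemma~5 of the appendix). This yields an \emph{individualized colored graph}, and the identifying sentence $\chi_G$ uses only $E$ and these unary colors, so Lemma~2 applies verbatim. Note also that this thresholding step is exactly where the quantitative $\delta$ enters (the failure probability is bounded by $\frac{c\cdot n^3}{k}+\frac1c\le\delta$ with $k=c^2n^3$); your remark that distinctness holds with probability $1$ for continuous features elides the fact that the network must convert the reals into finitely many usable symbols, which is the actual source of the $\delta$. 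To repair your argument, either prove an extension of the Barcel\'o et al.\ result to ordered graphs, or replace the order by distinct unary colors as the paper does.
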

For ease of presentation, we state the theorem only for real-valued functions, but note that it can be extended to equivariant functions. The result can also be extended to weighted graphs, but then the function $f$ needs to be continuous.

\subsection{Result Overview}
To prove \Cref{theo:uni}, we first show that MPNNs with RNI can capture arbitrary Boolean functions, by building on the result of \cite{BarceloKM0RS20}, which states that any logical sentence in $\LC^2$ can be captured by an MPNN (or, by an ACR-GNN in their terminology). The logic $\LC$ is the extension of first-order predicate logic using counting quantifiers of the form $\exists^{\ge k}x$ for $k\ge 0$, where $\exists^{\ge k}x\phi(x)$ means that there are at least $k$ elements $x$ satisfying $\phi$, and $\LC^2$  is the two-variable fragment of $\LC$.

We establish that any graph with identifying node features, which we call \emph{individualized graphs}, can be represented by a sentence in $\LC^2$. Then, we extend this result to sets of individualized graphs, and thus to Boolean functions mapping these sets to True, by showing that these functions are represented by a $\LC^2$ sentence, namely, the disjunction of all constituent graph sentences. 
Following this, we provide a construction with node embeddings based on RNI, and show that RNI individualizes input graphs w.h.p. 
Thus, RNI makes that MPNNs learn a Boolean function over individualized graphs w.h.p. Since all such functions can be captured by a sentence in $\LC^2$, and an MPNN can capture any Boolean function, MPNNs with RNI can capture arbitrary Boolean functions. 
Finally, the result is extended to real-valued functions via a natural mapping, yielding universality.

The concrete implications of \Cref{theo:uni} can be summarized as follows. First, MPNNs with RNI can distinguish individual graphs with an embedding dimensionality polynomial in the inverse of desired confidence $\delta$ (namely, $O(n^2 \delta^{-1})$, where $n$ is the number of graph nodes). 
Second, universality also holds with partial RNI, and even with only one randomized dimension. 
Third, the theorem is adaptive and tightly linked to the descriptive complexity of the approximated function. That is, for a more restricted class of functions, there may be more efficient constructions than the disjunction of individualized graph sentences, and our proof does not rely on a particular construction. 
Finally, our construction provides a \emph{logical characterization}for MPNNs with RNI, and substantiates how randomization improves expressiveness. This construction therefore also enables a more logically grounded theoretical study of randomized MPNN models, based on particular architectural or parametric choices.

Similarly to other universality results, Theorem~\ref{theo:uni} can potentially result in very large constructions. This is a simple consequence of the generality of such results: \Cref{theo:uni} applies to families of functions, describing problems of \emph{arbitrary} computational complexity, including problems that are computationally hard, even to approximate.
Thus, it is more relevant to empirically verify the formal statement, and test the capacity of MPNNs with RNI relative to higher-order GNNs. 
Higher-order GNNs typically suffer from prohibitive space requirements, but this not the case for MPNNs with RNI, and this already makes them more practically viable. 
In fact, our experiments demonstrate that MPNNs with RNI indeed combine expressiveness with efficiency in practice.

\section{Datasets for Expressiveness Evaluation} 
\label{sec:dataset} 

GNNs are typically evaluated on real-world datasets \cite{KKMMN2016}, which are not tailored for evaluating expressive power, as they do not contain instances indistinguishable by $1$-WL. In fact, higher-order models only marginally outperform MPNNs on these datasets \cite{BenchmarkingGNNs}, which further highlights their unsuitability.
Thus, we developed the synthetic datasets \EXP and \CEXP. \EXP explicitly evaluates GNN expressiveness, and consists of graph instances $\{G_1, \ldots ,G_n$, $H_1, \ldots ,H_n\}$, where each instance encodes  a propositional formula. The classification task is to determine whether the formula is satisfiable (\SAT). %
Each pair $(G_i$, $H_i)$ respects the following properties: 
(i)~$G_i$ and $H_i$ are non-isomorphic, 
(ii)~$G_i$ and $H_i$ have different \SAT outcomes, that is, $G_i$ encodes a satisfiable formula, while $H_i$ encodes an unsatisfiable formula, 
(iii)~$G_i$ and $H_i$ are 1-WL indistinguishable, so are \emph{guaranteed} to be classified in the same way by standard MPNNs, and 
(iv)~$G_i$ and $H_i$ are 2-WL distinguishable, so \emph{can} be classified differently by higher-order GNNs. 

Fundamentally, every $(G_i, H_i)$ is carefully constructed on top of a basic building block, the \emph{core pair}. 
In this pair, both cores are based on propositional clauses, such that one core is satisfiable and the other is not, both \emph{exclusively} determine the satisfiability of $G_i$ (resp., $H_i$), and have graph encodings enabling all aforementioned properties. 
Core pairs and their resulting graph instances in \EXP are \emph{planar} and are also carefully constrained to ensure that they are 2-WL distinguishable. Thus, core pairs are key substructures within \EXP, and distinguishing these cores is essential for a good performance.

Building on \EXP, \CEXP includes instances with varying expressiveness requirements.  Specifically, \CEXP is a standard \EXP dataset where 50\% of all satisfiable graph pairs are made 1-WL distinguishable from their unsatisfiable counterparts, only differing from these by a small number of added edges.
Hence, \CEXP consists of 50\% ``corrupted'' data, distinguishable by MPNNs and labelled \Corrupt,  and 50\% unmodified data, generated analogously to \EXP, and requiring expressive power beyond 1-WL, referred to as \EXPTwo. 
Thus, \CEXP contains the same core structures as \EXP, but these lead to different \SAT values in \EXPTwo and \Corrupt, which makes the learning task more challenging than learning \EXPTwo or \Corrupt in isolation.

\section{Experimental Evaluation} 

In this section, we first evaluate the effect of RNI on MPNN expressiveness based on \EXP, and compare against established higher-order GNNs. 
We then extend our analysis to \CEXP. 
Our experiments use the following models: 
\paragraph{1-WL GCN (\OneGCN).} A GCN with 8 distinct message passing iterations, ELU non-linearities \cite{ClevertUH15}, 64-dimensional embeddings, and deterministic learnable initial node embeddings indicating node type. 
	This model is guaranteed to achieve 50\% accuracy on \EXP. 
	
\paragraph{GCN - Random node initialization (\GCNRNI).} A 1-GCN enhanced with RNI. We evaluate this
	model with four initialization distributions, namely, 
	the standard normal distribution $\mathcal N(0,1)$ (N),
	the uniform distribution over $[-1,1]$ (U), Xavier normal (XN), and the Xavier uniform distribution (XU)   \cite{GlorotB10}. %
	We denote the respective models \GCNRNI($D$), where $D \in \{\text{N,\,U,\,XN,\,XU}\}$.  
	
\paragraph{GCN - Partial RNI (GCN-$x$\%RNI).}
	A \GCNRNI~model, where $\floor{\frac{64x}{100}}$ dimensions are initially randomized, and all remaining dimensions are set deterministically from one-hot representation of the two input node types (literal and disjunction). We set $x$ to the extreme values 0 and 100\%, 50\%, as well as  near-edge cases of 87.5\% and 12.5\%, respectively.
	
\paragraph{PPGN.} A higher-order GNN with 2-WL expressive power \cite{MaronBSL19}. We set up PPGN using its original implementation, and use its default configuration of eight 400-dimensional computational blocks.
	
\paragraph{\OneTwoThreeLocal.} A higher-order GNN~\cite{MorrisAAAI19} emulating $2$-WL on 3-node tuples. \OneTwoThreeLocal operates at increasingly coarse granularity, starting with single nodes and rising to 3-tuples. This model uses a \emph{connected} relaxation of 2-WL, which slightly reduces space requirements, but comes at the cost of some theoretical guarantees.  We set up \OneTwoThreeLocal with 64-dimensional embeddings, 3 message passing iterations at level 1, 2 at level 2, and 8 at level 3.
	
\paragraph{\ThreeGNNFull.} A GCN analog of the 
	\emph{full} 2-WL procedure over 3-node tuples, thus preserving all theoretical guarantees.

\subsection{How Does RNI Improve Expressiveness?}

In this experiment, we evaluate GCNs using different RNI settings on \EXP, and compare with standard GNNs and higher-order models.
Specifically, we generate an \EXP dataset consisting of 600 graph pairs. Then, we evaluate all models on \EXP using 10-fold cross-validation. We train \ThreeGNNFull for 100 epochs per fold, and all other systems for 500 epochs, and report \emph{mean test accuracy} across all folds.

\begin{table}[t!]
	\centering
	\begin{tabular}{HlHc} 
		\toprule
		\multicolumn{2}{c}{Model} & Training Accuracy (\%) & Test Accuracy (\%)\\
		\midrule
		\multirow{4}{*}{\textit{\GCNRNI}} & \GCNRNI(U) & 97.9 $\pm$ 0.64 & 97.3 $\pm$ 2.55 \\ 
		& \textbf{\GCNRNI(N)} & \textbf{98.0 $\pm$ 0.53} & \textbf{98.0 $\pm$ 1.85} \\ 
		& \GCNRNI(XU) & 97.0 $\pm$ 0.83 & 97.0 $\pm$ 1.43 \\
		& \GCNRNI(XN) & 97.0 $\pm$ 1.20 & 96.6 $\pm$ 2.20 \\ 
		\cmidrule{1-4}
		\multirow{3}{*}{\textit{HO-GNNs}} & PPGN & 50.0 & 50.0\\ 
		& \OneTwoThreeLocal & 50.0 & 50.0\\ 
		& \textbf{\ThreeGNNFull} & \textbf{99.9 $\pm$ 0.002} & \textbf{99.7 $\pm$ 0.004} \\ 
		\bottomrule
	\end{tabular}
		\caption{Accuracy results on \EXP.}
	\label{tab:exp1Results}
\end{table}%

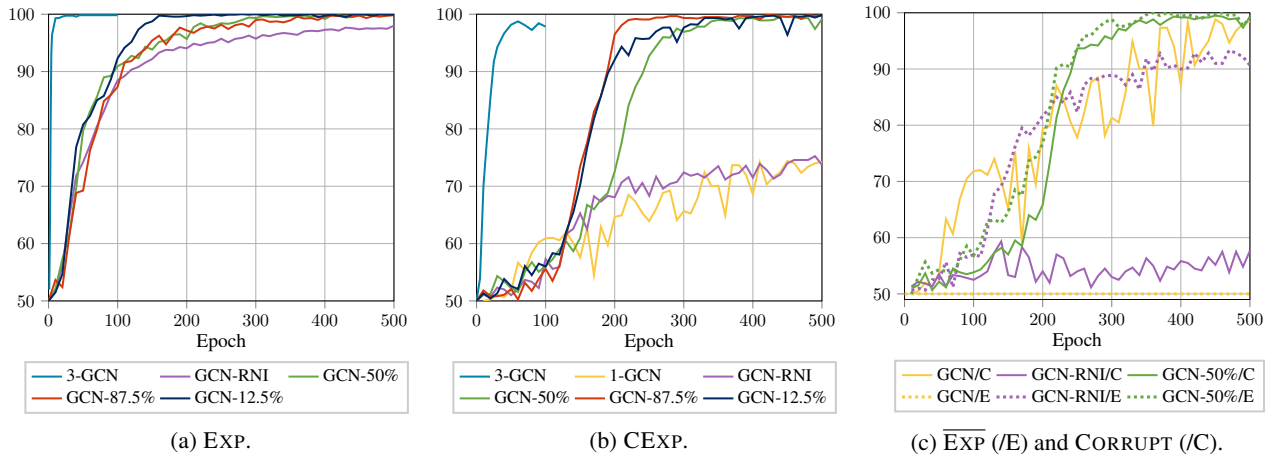
\begin{figure*}[t!]
	\centering
	\begin{subfigure}{.32\textwidth}
		\centering
		\begin{tikzpicture}[scale=0.67]

\definecolor{ThreeGCN}{rgb}{0,0.516,0.66}
\definecolor{GCNStandard}{rgb}{0.98,0.79,0.26}
\definecolor{GCNRNI12}{rgb}{0,0.16,0.39}
\definecolor{GCNRNI50}{rgb}{0.40,0.66,0.26}
\definecolor{GCNRNI87}{rgb}{0.84,0.22,0.05}
\definecolor{GCNRNIFull}{rgb}{0.63,0.39,0.7}

\begin{axis}[
legend cell align={left},
legend columns=3,
legend entries={{\ThreeGNNFull},{\GCNRNI}, {GCN-50\%}, {GCN-87.5\%}, {GCN-12.5\%}},
legend style={at={(-0.07, -0.21)}, font=\normalsize, line width=0.4mm, anchor=north west, draw=white!80.0!black},
label style={font=\large},
tick label style={font=\normalsize},
tick align=outside,
tick pos=left,
x grid style={white!69.01960784313725!black},
xlabel={Epoch},
xmajorgrids,
xmin=0, xmax=500, 
y grid style={white!69.01960784313725!black},
ymajorgrids,
ymin=50, ymax=100
]

\addplot [line width=0.4mm, ThreeGCN]
table [row sep=\\]{%
1 52\\
2 62.41\\
3 81.1\\
4 93.1\\
5 96.4\\
10 99.3\\
15 99.33\\
20 99.58\\
25 99.75\\
30 99.75\\
35 99.75\\
40 99.58\\
45 99.83\\
90 99.83\\
95 99.83\\
100 99.83\\
};

\addplot [line width=0.4mm, GCNRNIFull]
table [row sep=\\]{%
0 50\\
10 51.7\\
20 56.2\\
30 64.6\\
40 71.85\\
50 74.21\\
60 77.18\\
70 80.45\\
80 83.06\\
90 85.98\\ 
100 88.48\\
110 89.21\\
120 90.33\\
130 90.8\\
140 91.56\\
150 92.19\\
160 93.31\\
170 93.79\\
180 93.71\\
190 94.27\\
200 94.05\\
210 94.86\\
220 94.59\\
230 95\\
240 95.12\\
250 95.72\\
260 95.10\\
270 95.70\\
280 95.87\\
290 96.27\\
300 95.73\\
310 96.36\\
320 96.17\\
330 96.53\\
340 96.73\\
350 96.59\\
360 96.40\\
370 97.01\\
380 97.11\\
390 97.04\\
400 97.28\\
410 97.36\\
420 97.13\\
430 97.66\\
440 97.56\\
450 97.37\\
460 97.54\\
470 97.48\\
480 97.55\\
490 97.46\\
500 98.01\\
};

\addplot [line width=0.4mm, GCNRNI50]
table [row sep=\\]{%
0 50\\
10 52.17\\
20 57.33\\
30 61.58\\
40 69.75\\
50 79.67\\
60 83.08\\
70 85.58\\
80 89.00\\
90 89.25\\ 
100 90.92\\
110 91.58\\
120 92.75\\
130 92.33\\
140 94.33\\
150 93.83\\
160 95.17\\
170 95.42\\
180 96.42\\
190 96.50\\
200 95.67\\
210 97.75\\
220 98.42\\
230 97.83\\
240 98.00\\
250 98.42\\
260 98.33\\
270 98.58\\
280 99.00\\
290 99.42\\
300 99.33\\
310 99.67\\
320 99.58\\
330 99.50\\
340 99.58\\
350 99.67\\
360 99.58\\
370 99.75\\
380 99.58\\
390 99.67\\
400 99.17\\
410 99.83\\
420 99.83\\
430 99.42\\
440 99.50\\
450 99.83\\
460 99.92\\
470 99.75\\
480 99.83\\
490 99.75\\
500 99.92\\
};

\addplot [line width=0.4mm, GCNRNI87] %
table [row sep=\\]{%
0 50\\
10 53.67\\
20 52.41\\
30 61.33\\
40 68.83\\
50 69.25\\
60 76.25\\
70 79.83\\
80 84.83\\
90 85.92\\ 
100 87.33\\
110 91.58\\
120 91.83\\
130 93.00\\
140 93.92\\
150 95.42\\
160 96.25\\
170 94.75\\
180 96.33\\
190 97.67\\
200 97.17\\
210 96.75\\
220 97.50\\
230 97.83\\
240 97.50\\
250 98.17\\
260 97.42\\
270 98.08\\
280 98.33\\
290 97.83\\
300 99.00\\
310 99.08\\
320 98.67\\
330 98.75\\
340 98.58\\
350 98.92\\
360 99.58\\
370 99.25 \\
380 99.33 \\
390 99.00 \\
400 99.58 \\
410 99.75 \\
420 99.67 \\
430 99.67 \\
440 99.58 \\
450 99.75 \\
460 99.33 \\
470 99.67 \\
480 99.58 \\
490 99.75 \\
500 99.83 \\
};

\addplot [line width=0.4mm, GCNRNI12] %
table [row sep=\\]{%
0 50\\
10 51.50\\
20 54.67\\
30 65.83\\
40 76.83\\
50 80.75\\
60 82.25\\
70 85.00\\
80 85.75\\
90 88.58\\ 
100 92.33\\
110 94.17\\
120 95.17\\
130 97.33\\
140 98.33\\
150 98.75\\
160 99.75\\
170 99.67\\
180 99.58\\
190 99.58\\
200 99.67\\
210 99.83\\
220 99.75\\
230 99.83\\
240 99.92\\
250 99.75\\
260 99.92\\
270 99.92\\
280 100.0\\
290 99.92\\
300 100.0\\
310 99.83\\
320 99.92\\
330 100.0\\
340 99.75\\
350 99.83\\
360 99.83\\
370 99.92\\
380 100.0 \\
390 99.50 \\
400 99.75 \\
410 99.92 \\
420 99.75 \\
430 99.50 \\
440 99.92 \\
450 100.0 \\
460 99.93 \\
470 99.75 \\
480 100.0 \\
490 99.92 \\
500 99.92 \\
};
\end{axis}
\end{tikzpicture}
		\caption{\EXP.}
		\label{fig:modelConvergence}
	\end{subfigure}%
	\begin{subfigure}{.32\textwidth}
		\centering
		\begin{tikzpicture}[scale=0.67]

\definecolor{ThreeGCN}{rgb}{0,0.516,0.66}
\definecolor{GCNStandard}{rgb}{0.98,0.79,0.26}
\definecolor{GCNRNI12}{rgb}{0,0.16,0.39}
\definecolor{GCNRNI50}{rgb}{0.40,0.66,0.26}
\definecolor{GCNRNI87}{rgb}{0.84,0.22,0.05}
\definecolor{GCNRNIFull}{rgb}{0.63,0.39,0.7}

\begin{axis}[
legend cell align={left},
legend columns=3,
legend entries={{\ThreeGNNFull},{\OneGCN},{\GCNRNI}, {GCN-50\%}, {GCN-87.5\%}, {GCN-12.5\%}},
legend style={at={(-0.07, -0.21)}, font=\normalsize, line width=0.4mm, anchor=north west, draw=white!80.0!black},
label style={font=\large},
tick label style={font=\normalsize},
tick align=outside,
tick pos=left,
x grid style={white!69.01960784313725!black},
xlabel={Epoch},
xmajorgrids,
xmin=0, xmax=500, 
y grid style={white!69.01960784313725!black},
ymajorgrids,
ymin=50, ymax=100
]

\addplot [line width=0.4mm, ThreeGCN]
table [row sep=\\]{%
0 50\\
5 53.91 \\
10 70.08\\
15 78.00 \\
20 85.33 \\
25 91.74 \\
30 94.25 \\
40 96.67 \\
50 98.17 \\
60 98.75 \\
70 98.08 \\
80 97.25 \\
90 98.42 \\ 
100 97.83\\
};

\addplot [line width=0.4mm,GCNStandard]
table [row sep=\\]{
10 50.0\\
20 50.25000000000001\\
30 51.0\\
40 50.66666666666666\\
50 51.83333333333333\\
60 56.66666666666668\\
70 55.333333333333336\\
80 58.41666666666667\\
90 60.25\\
100 60.916666666666664\\
110 61.0\\
120 60.583333333333336\\
130 61.999999999999986\\
140 60.083333333333336\\
150 57.58333333333333\\
160 62.66666666666667\\
170 54.49999999999999\\
180 62.916666666666664\\
190 59.75\\
200 64.66666666666667\\
210 64.91666666666667\\
220 68.5\\
230 67.33333333333333\\
240 65.33333333333333\\
250 63.91666666666668\\
260 66.08333333333334\\
270 68.83333333333333\\
280 69.25000000000001\\
290 64.08333333333334\\
300 65.66666666666667\\
310 65.25000000000001\\
320 67.91666666666666\\
330 72.41666666666666\\
340 70.0\\
350 70.08333333333334\\
360 64.91666666666667\\
370 73.66666666666666\\
380 73.66666666666667\\
390 72.0\\
400 68.58333333333333\\
410 74.16666666666666\\
420 70.33333333333334\\
430 71.58333333333333\\
440 72.5\\
450 74.41666666666669\\
460 74.0\\
470 72.33333333333334\\
480 73.41666666666666\\
490 74.0\\
500 74.25\\};

\addplot [line width=0.4mm, GCNRNIFull] %
table [row sep=\\]{%
0 50\\
10 51.08\\
20 50.67\\
30 52.33\\
40 52\\
50 51\\
60 52\\
70 53.67\\
80 53.33\\
90 52.25\\ 
100 57.33\\
110 55.58\\
120 55.92\\
130 61.67\\
140 62.58\\
150 65.25\\
160 62.5\\
170 68.25\\
180 67.33\\
190 68.33\\
200 68.08\\
210 70.67\\
220 71.58\\
230 68.83\\
240 70.58\\
250 68.33\\
260 71.67\\
270 69.58\\
280 70.42\\
290 70.75\\
300 72.42\\
310 71.83\\
320 72.17\\
330 71.50\\
340 72.50\\
350 73.50\\
360 71.17\\
370 72.00 \\
380 72.33 \\
390 73.58 \\
400 71.50 \\
410 73.92\\
420 72.83\\
430 71.33\\
440 72.08\\
450 74.00\\
460 74.58\\
470 74.58\\
480 74.58\\
490 75.25\\
500 73.75\\
};

\addplot [line width=0.4mm, GCNRNI50] %
table [row sep=\\]{%
0 50\\
10 51.17\\
20 51.17\\
30 54.33\\
40 53.42\\
50 52.25\\
60 51.58\\
70 55.08\\
80 56.75\\
90 55.08\\ 
100 56.08\\
110 57.25\\
120 59.00\\
130 60.33\\
140 58.67\\
150 61.00\\
160 66.75\\
170 66.00\\
180 67.67\\
190 68.83\\
200 72.58\\
210 77.58\\
220 84.08\\
230 87.33\\
240 89.58\\
250 92.75\\
260 94.41\\
270 96.00\\
280 95.92\\
290 97.50\\
300 96.92\\
310 97.17\\
320 97.83\\
330 97.83\\
340 98.75\\
350 99.00\\
360 98.92\\
370 98.83 \\
380 99.33 \\
390 99.25 \\
400 99.33 \\
410 98.92 \\
420 98.92 \\
430 99.00 \\
440 99.58 \\
450 99.58 \\
460 99.50 \\
470 99.25 \\
480 99.33 \\
490 97.42 \\
500 99.17 \\
};

\addplot [line width=0.4mm, GCNRNI87] %
table [row sep=\\]{%
0 50\\
10 51.83\\
20 50.83\\
30 50.83\\
40 51.08\\
50 52.08\\
60 50.25\\
70 53.17\\
80 51.75\\
90 53.83\\ 
100 55.58\\
110 53.50\\
120 56.17\\
130 61.42\\
140 67.00\\
150 73.50\\
160 77.58\\
170 83.00\\
180 85.67\\
190 90.42\\
200 96.50\\
210 98.08\\
220 99.00\\
230 99.17\\
240 99.08\\
250 99.08\\
260 99.42\\
270 99.42\\
280 99.67\\
290 99.67\\
300 99.33\\
310 99.33\\
320 99.25\\
330 99.50\\
340 99.50\\
350 99.50\\
360 99.42\\
370 99.33 \\
380 99.75 \\
390 99.67 \\
400 99.67 \\
410 99.08 \\
420 99.92 \\
430 99.83 \\
440 99.67 \\
450 99.50 \\
460 99.75 \\
470 99.17 \\
480 99.83 \\
490 99.67 \\
500 99.75 \\
};

\addplot [line width=0.4mm, GCNRNI12] %
table [row sep=\\]{%
0 50\\
10 51.33\\
20 50.42\\
30 51.33\\
40 53.83\\
50 52.58\\
60 52.08\\
70 56.08\\
80 54.50\\
90 56.50\\ 
100 56.00\\
110 58.33\\
120 58.08\\
130 62.17\\
140 65.41\\
150 70.25\\
160 76.67\\
170 81.75\\
180 85.73\\
190 89.67\\
200 92.08\\
210 94.33\\
220 92.83\\
230 95.83\\
240 95.67\\
250 95.75\\
260 97.00\\
270 97.67\\
280 97.67\\
290 95.17\\
300 97.67\\
310 98.25\\
320 98.17\\
330 99.33\\
340 99.33\\
350 99.33\\
360 99.08\\
370 99.25 \\
380 97.50 \\
390 99.33 \\
400 99.50 \\
410 99.58 \\
420 99.67 \\
430 99.75 \\
440 99.33 \\
450 96.42 \\
460 99.58 \\
470 99.58 \\
480 99.75 \\
490 99.42 \\
500 99.83 \\
};
\end{axis}
\end{tikzpicture}
		\caption{\CEXP.}
		\label{fig:modelConvergencePlus}
	\end{subfigure}%
	\begin{subfigure}{.32\textwidth}
		\centering
		\begin{tikzpicture}[scale=0.67]

\definecolor{ThreeGCN}{rgb}{0,0.516,0.66}
\definecolor{GCNStandard}{rgb}{0.98,0.79,0.26}
\definecolor{GCNRNI12}{rgb}{0,0.16,0.39}
\definecolor{GCNRNI50}{rgb}{0.40,0.66,0.26}
\definecolor{GCNRNI87}{rgb}{0.84,0.22,0.05}
\definecolor{GCNRNIFull}{rgb}{0.63,0.39,0.7}

\begin{axis}[
legend cell align={left},
legend columns=3,
legend entries={{GCN/C},{\GCNRNI/C}, {GCN-50\%/C},{GCN/E},{\GCNRNI/E}, {GCN-50\%/E}},
legend style={at={(-0.03, -0.21)}, font=\normalsize, line width=0.4mm, anchor=north west, draw=white!80.0!black},
label style={font=\large},
tick label style={font=\normalsize},
tick align=outside,
tick pos=left,
x grid style={white!69.01960784313725!black},
xlabel={Epoch},
xmajorgrids,
xmin=0, xmax=500, 
y grid style={white!69.01960784313725!black},
ymajorgrids,
ymin=49.0, ymax=100
]

\addplot [line width=0.4mm, GCNStandard, mark size=1pt]
table [row sep=\\]{%
10 50.0\\
20 50.5\\
30 52.0\\
40 51.33333333333334\\
50 53.666666666666664\\
60 63.33333333333332\\
70 60.66666666666667\\
80 66.83333333333333\\
90 70.5\\
100 71.83333333333333\\
110 72.0\\
120 71.16666666666667\\
130 74.00000000000001\\
140 70.16666666666667\\
150 65.16666666666666\\
160 75.33333333333333\\
170 59.0\\
180 75.83333333333333\\
190 69.5\\
200 79.33333333333333\\
210 79.83333333333333\\
220 87.00000000000001\\
230 84.66666666666667\\
240 80.66666666666666\\
250 77.83333333333333\\
260 82.16666666666667\\
270 87.66666666666667\\
280 88.5\\
290 78.16666666666666\\
300 81.33333333333333\\
310 80.5\\
320 85.83333333333331\\
330 94.83333333333333\\
340 90.0\\
350 90.16666666666667\\
360 79.83333333333334\\
370 97.33333333333333\\
380 97.33333333333334\\
390 94.0\\
400 87.16666666666667\\
410 98.33333333333331\\
420 90.66666666666666\\
430 93.16666666666666\\
440 95.0\\
450 98.83333333333333\\
460 98.00000000000001\\
470 94.66666666666667\\
480 96.83333333333334\\
490 98.00000000000001\\
500 98.5\\
};

\addplot [line width=0.4mm, GCNRNIFull,mark size=1pt]
table [row sep=\\]{%
10 51.33333333333333\\
20 52.16666666666667\\
30 51.83333333333333\\
40 51.33333333333333\\
50 53.49999999999999\\
60 51.33333333333334\\
70 53.333333333333336\\
80 53.16666666666667\\
90 52.83333333333334\\
100 52.5\\
110 53.166666666666664\\
120 54.0\\
130 57.33333333333333\\
140 59.33333333333333\\
150 53.333333333333336\\
160 53.0\\
170 58.33333333333333\\
180 56.49999999999999\\
190 52.16666666666667\\
200 53.99999999999999\\
210 52.0\\
220 56.99999999999999\\
230 56.33333333333332\\
240 53.166666666666664\\
250 54.0\\
260 54.50000000000001\\
270 51.16666666666667\\
280 53.166666666666664\\
290 54.50000000000001\\
300 53.0\\
310 52.5\\
320 53.99999999999999\\
330 54.666666666666664\\
340 53.33333333333334\\
350 56.333333333333336\\
360 55.00000000000001\\
370 52.33333333333333\\
380 54.833333333333336\\
390 54.333333333333336\\
400 55.833333333333336\\
410 55.49999999999999\\
420 54.50000000000001\\
430 54.16666666666667\\
440 56.50000000000001\\
450 55.16666666666667\\
460 57.49999999999999\\
470 53.833333333333336\\
480 57.49999999999999\\
490 54.83333333333334\\
500 57.666666666666664\\
};

\addplot [line width=0.4mm, GCNRNI50,mark size=1pt] %
table [row sep=\\]{%
10 51.16666666666667\\
20 51.5\\
30 53.666666666666664\\
40 50.66666666666666\\
50 52.166666666666664\\
60 51.16666666666667\\
70 54.49999999999999\\
80 53.83333333333334\\
90 53.49999999999999\\
100 53.833333333333336\\
110 54.33333333333332\\
120 55.666666666666664\\
130 57.333333333333336\\
140 58.166666666666664\\
150 57.00000000000001\\
160 59.5\\
170 58.50000000000001\\
180 64.0\\
190 63.16666666666666\\
200 65.83333333333333\\
210 73.16666666666667\\
220 81.33333333333333\\
230 86.16666666666667\\
240 89.16666666666666\\
250 93.66666666666667\\
260 93.66666666666666\\
270 94.33333333333334\\
280 94.16666666666667\\
290 95.83333333333333\\
300 95.33333333333334\\
310 97.16666666666667\\
320 96.83333333333331\\
330 98.00000000000001\\
340 98.66666666666667\\
350 98.16666666666667\\
360 98.83333333333333\\
370 97.83333333333331\\
380 98.66666666666667\\
390 99.33333333333334\\
400 99.16666666666669\\
410 99.16666666666669\\
420 98.83333333333333\\
430 99.16666666666666\\
440 99.16666666666666\\
450 99.49999999999999\\
460 99.33333333333334\\
470 98.83333333333333\\
480 99.0\\
490 97.33333333333334\\
500 99.33333333333334\\
};

\addplot [dotted, line width=0.6mm, GCNStandard, mark size=1pt]
table [row sep=\\]{%
0 50\\
10 50\\
20 50\\
30 50\\
40 50\\
50 50\\
60 50\\
70 50\\
80 50\\
90 50\\
100 50\\
110 50\\
120 50\\
130 50\\
140 50\\
150 50\\
160 50\\
170 50\\
180 50\\
190 50\\
200 50\\
210 50\\
220 50\\
230 50\\
240 50\\
250 50\\
260 50\\
270 50\\
280 50\\
290 50\\
300 50\\
310 50\\
320 50\\
330 50\\
340 50\\
350 50\\
360 50\\
370 50\\
380 50\\
390 50\\
400 50\\
410 50\\
420 50\\
430 50\\
440 50\\
450 50\\
460 50\\
470 50\\
480 50\\
490 50\\
500 50\\
};

\addplot [dotted, line width=0.6mm, GCNRNIFull, mark size=1pt]
table [row sep=\\]{%
10 50.66666666666666\\
20 51.0\\
30 50.66666666666666\\
40 52.5\\
50 52.66666666666667\\
60 55.666666666666664\\
70 51.16666666666666\\
80 57.16666666666666\\
90 56.49999999999999\\
100 56.99999999999999\\
110 56.666666666666664\\
120 61.5\\
130 67.83333333333333\\
140 69.16666666666667\\
150 72.16666666666669\\
160 76.16666666666666\\
170 79.5\\
180 78.16666666666666\\
190 79.83333333333333\\
200 81.66666666666667\\
210 82.99999999999999\\
220 85.0\\
230 84.00000000000001\\
240 85.83333333333334\\
250 82.33333333333333\\
260 87.16666666666667\\
270 88.33333333333334\\
280 88.16666666666666\\
290 88.83333333333333\\
300 88.83333333333333\\
310 88.66666666666664\\
320 87.16666666666667\\
330 89.0\\
340 86.33333333333333\\
350 91.83333333333333\\
360 89.66666666666666\\
370 92.66666666666666\\
380 90.16666666666666\\
390 90.66666666666666\\
400 90.0\\
410 90.16666666666666\\
420 92.83333333333333\\
430 91.16666666666666\\
440 92.83333333333333\\
450 90.99999999999999\\
460 90.99999999999999\\
470 93.33333333333333\\
480 92.83333333333331\\
490 92.16666666666666\\
500 90.66666666666666\\
};

\addplot [dotted, line width=0.6mm, GCNRNI50, mark size=1pt]
table [row sep=\\]{%
10 50.0\\
20 53.0\\
30 55.666666666666664\\
40 53.666666666666664\\
50 54.33333333333332\\
60 54.0\\
70 53.66666666666667\\
80 55.99999999999999\\
90 58.5\\
100 56.833333333333336\\
110 59.0\\
120 62.83333333333333\\
130 63.16666666666666\\
140 62.66666666666667\\
150 64.5\\
160 68.5\\
170 67.5\\
180 73.66666666666667\\
190 74.33333333333333\\
200 76.66666666666667\\
210 82.66666666666666\\
220 90.16666666666666\\
230 90.83333333333331\\
240 90.33333333333333\\
250 93.83333333333331\\
260 95.83333333333333\\
270 96.0\\
280 97.33333333333333\\
290 98.33333333333331\\
300 98.83333333333333\\
310 97.49999999999999\\
320 97.66666666666666\\
330 98.16666666666667\\
340 98.83333333333333\\
350 99.66666666666667\\
360 100.0\\
370 99.49999999999999\\
380 100.0\\
390 99.66666666666667\\
400 99.33333333333334\\
410 99.66666666666667\\
420 99.33333333333334\\
430 99.0\\
440 99.66666666666667\\
450 99.66666666666667\\
460 100.0\\
470 99.49999999999999\\
480 99.66666666666667\\
490 97.66666666666666\\
500 99.16666666666669\\
};

\end{axis}
\end{tikzpicture}
		\caption{\EXPTwo (/E) and \Corrupt (/C).}
		\label{fig:modelConvergenceSplit}
	\end{subfigure}%
	\label{fig:res}
	\caption{Learning curves across all experiments for all models.}
\end{figure*}
Full test accuracy results for all models are reported in Table \ref{tab:exp1Results}, and model convergence for \ThreeGNNFull and all \GCNRNI~models are shown in \Cref{fig:modelConvergence}.
In line with \Cref{theo:uni}, \GCNRNI~achieves a near-perfect performance on \EXP, substantially surpassing 50\%. Indeed, \GCNRNI~models achieve above 95\% accuracy with all four RNI distributions. %
This finding further supports observations made with rGNNs \cite{SatoRandom2020}, and shows that RNI is also beneficial in settings beyond structure detection. 
Empirically, we observed that \GCNRNI~is highly sensitive to changes in learning rate, activation function, and/or randomization distribution, and required delicate tuning to achieve its best performance.

Surprisingly, PPGN does not achieve a performance above 50\%, despite
being theoretically 2-WL expressive.  
Essentially, PPGN learns an approximation of 2-WL, based on power-sum multi-symmetric polynomials (PMP), but fails to distinguish \EXP graph pairs, despite extensive training. This suggests that PPGN struggles to learn the required PMPs, and we could not improve these results, both for training and testing, with hyperparameter tuning. 
Furthermore, PPGN requires exponentially many data
samples in the size of the input graph \cite{PunyLRGA} for learning. Hence, PPGN is likely struggling to discern between \EXP graph pairs due to the smaller sample size and variability of the dataset. 
\OneTwoThreeLocal also only achieves 50\% accuracy, which can be attributed to theoretical model limitations. Indeed, this algorithm only considers 3-tuples of nodes that form a connected subgraph, thus discarding disconnected 3-tuples, where the difference between \EXP cores lies. This further highlights the difficulty of \EXP, as even relaxing 2-WL reduces the model to random performance.
Note that \ThreeGNNFull achieves near-perfect performance, as it explicitly has the necessary theoretical power, irrespective of learning constraints, and must only learn appropriate injective aggregation functions for neighbor aggregation \cite{Keyulu18}.

In terms of convergence, we observe that \ThreeGNNFull converges
significantly faster than \GCNRNI~models, for all randomization
percentages. Indeed, \ThreeGNNFull only requires about 10 epochs to
achieve optimal performance, whereas \GCNRNI~models all require over 100 epochs. 
Intuitively, this slower convergence of \GCNRNI~can be attributed to a harder learning task compared to \ThreeGNNFull: Whereas
\ThreeGNNFull learns from deterministic embeddings, and can naturally discern between dataset cores, \GCNRNI~relies on RNI to discern between \EXP data points, via an artificial node ordering. This implies that \GCNRNI~must leverage RNI to detect structure, then subsequently learn robustness against RNI variability, which makes its learning task especially challenging.

Our findings suggest that RNI practically improves  MPNN expressiveness, and makes them competitive with higher-order models, despite being less demanding computationally. Indeed, for a 50-node graph, GCN-RNI only requires 3200 parameters (using 64-dimensional embeddings), whereas \ThreeGNNFull requires 1,254,400 parameters. Nonetheless, GCN-RNI performs comparably to \ThreeGNNFull, and, unlike the latter, can easily scale to larger instances. This increase in expressive power, however, comes at the cost of slower convergence. Even so, RNI proves to be a promising direction for building scalable yet powerful MPNNs. 
\subsection{How Does RNI Behave on Variable Data?} 

In the earlier experiment, RNI practically improves the expressive power of GCNs over \EXP. However, \EXP solely evaluates expressiveness, and this leaves multiple questions open: How does RNI impact learning when data contains instances with varying expressiveness requirements, and how does RNI affect generalization on more variable datasets? We experiment with \CEXP to explicitly address these questions.

We generated \CEXP by generating another 600 graph pairs, then selecting 300 of these and modifying their satisfiable graph, yielding \Corrupt.
\CEXP is well-suited for holistically evaluating the efficacy of RNI, as it evaluates the contribution of RNI on \EXPTwo conjointly with a second learning task on \Corrupt involving very similar core structures, and assesses the effect of different randomization degrees on overall and subset-specific model performance. 

In this experiment, we train \GCNRNI~(with varying randomization degrees) and \ThreeGNNFull on \CEXP, and compare their accuracy. For \GCNRNI, we observe the effect of RNI on learning \EXPTwo and \Corrupt, and the interplay between these tasks. %
In all experiments, we use the normal distribution for RNI, given its strong performance in the earlier experiment.

The learning curves of all \GCNRNI~ and \ThreeGNNFull on \CEXP are shown in \Cref{fig:modelConvergencePlus}, and the same curves for the \EXPTwo and \Corrupt subsets are shown in \Cref{fig:modelConvergenceSplit}. As on \EXP, \ThreeGNNFull converges very quickly, exceeding 90\% test accuracy within 25 epochs on \CEXP. By contrast, \GCNRNI, for all randomization levels, converges much slower, around after 200 epochs, despite the small size of input graphs ($\sim$70 nodes at most). Furthermore, fully randomized \GCNRNI~ performs worse than partly randomized \GCNRNI, particularly on \CEXP, due to its weak performance on \Corrupt.

First, we observe that partial randomization significantly improves performance. This can clearly be seen on \CEXP, where GCN-12.5\%RNI and GCN-87.5\%RNI achieve the best performance, by far outperforming GCN-RNI, which struggles on \Corrupt.
This can be attributed to having a better inductive bias than a fully randomized model.  Indeed, GCN-12.5\%RNI has mostly deterministic node
embeddings, which simplifies learning over \Corrupt. This also applies to GCN-87.5\%RNI, where the number of deterministic dimensions, though small, remains sufficient.  Both models also benefit from randomization for \EXPTwo, similarly to a fully randomized GCN.
GCN-12.5\%RNI and GCN-87.5\%RNI effectively achieve the best of both worlds on \CEXP, leveraging inductive bias from deterministic node embeddings, while harnessing the power of RNI to perform strongly on \EXPTwo. This is best shown in \Cref{fig:modelConvergenceSplit}, where standard GCN fails to learn \EXPTwo, fully randomized GCN-RNI struggles to learn \Corrupt, and the semi-randomized GCN-50\%RNI achieves perfect performance on both subsets. We also note that partial RNI, when applied to several real datasets, where 1-WL power is sufficient, did not harm performance \cite{SatoRandom2020}, and thus at least preserves the original learning ability of MPNNs in such settings. Overall, these are surprising findings, which suggest that MPNNs can viably improve across all possible data with partial and even small amounts of randomization. 

Second, we  observe that the fully randomized \GCNRNI~ performs substantially worse than its partially randomized counterparts. Whereas fully randomized \GCNRNI~ only performs marginally worse on \EXP (cf. \Cref{fig:modelConvergence}) than partially randomized models, this gap is very large on \CEXP, primarily due to \Corrupt. 
This observation concurs with the earlier idea of inductive bias: Fully randomized \GCNRNI~ loses all node type information, which is key for \Corrupt, and therefore struggles. Indeed, the model fails to achieve even 60\% accuracy on \Corrupt, where other models are near perfect, and also relatively struggles on \EXPTwo, only reaching 91\% accuracy and converging slower. 

Third, all \GCNRNI~models, at all randomization levels, converge significantly slower than \ThreeGNNFull on both \CEXP and \EXP. However, an interesting phenomenon can be seen on \CEXP: All \GCNRNI ~models fluctuate around 55\% accuracy within the first 100 epochs, suggesting a struggle jointly fitting both \Corrupt and \EXPTwo, before they ultimately improve. This, however, is not observed with \ThreeGNNFull. Unlike on \EXP, randomness is not necessarily beneficial on \CEXP, as it can hurt performance on \Corrupt.
Hence, RNI-enhanced models must additionally learn to isolate  deterministic dimensions for \Corrupt, and randomized dimensions for \EXPTwo. 
These findings consolidate the earlier observations made on \EXP, and highlight that the variability and slower learning for RNI also hinges on the complexity of the input dataset.  

Finally, we observe that both fully randomized \GCNRNI, and, surprisingly, \OneGCN, struggle to learn \Corrupt relative to partially randomized \GCNRNI. We also observe that \OneGCN does not ``struggle'', and begins improving consistently from the start of training. 
These observations can be attributed to key conceptual , but very distinct hindrances impeding both models. For \OneGCN, the model is jointly trying to learn both \EXPTwo and \Corrupt, when it provably cannot fit the former. This joint optimization severely hinders \Corrupt learning, as data pairs from both subsets are highly similar, and share identically generated UNSAT graphs. 
Hence, \OneGCN, in attempting to fit SAT graphs from both subsets, knowing it cannot distinguish \EXPTwo pairs, struggles to learn the simpler difference in \Corrupt pairs. For \GCNRNI, the model discards key type information, so must only rely on structural differences to learn \Corrupt, which impedes its convergence. All in all, this further consolidates the promise of partial RNI as a means to combine the strengths of both deterministic and random features.

\section{Related Work}
\label{sec:rw}

MPNNs have been enhanced with RNI \cite{SatoRandom2020}, such that the model trains and runs with partially randomized initial node features. These models, denoted rGNNs, are shown to near-optimally approximate solutions to specific combinatorial optimization problems, and can distinguish between 1-WL indistinguishable graph pairs based on fixed local substructures. Nonetheless, the precise impact of RNI on GNNs for learning arbitrary functions over graphs remained open. Indeed, rGNNs are only shown to admit parameters that can detect a \emph{unique, fixed} substructure, and thus tasks requiring \emph{simultaneous} detection of multiple combinations of structures, as well as problems having no locality or structural biases, are not captured by the existing theory. 

Our work improves on Theorem 1 of \cite{SatoRandom2020}, and shows \emph{universality} of MPNNs with RNI. Thus, it shows that arbitrary real-valued functions over graphs can be learned by MPNNs with RNI. Our result is distinctively based on a logical characterization of MPNNs, which allows us to link the size of the MPNN with the descriptive complexity of the target function to be learned.
Empirically, we highlight that the power of RNI in a significantly more challenging setting, using a target function (\SAT) which does not rely on local structures, is \emph{hard} to approximate. %

Similarly to RNI, random pre-set color features have been used to disambiguate between nodes \cite{DasoulasSSV20}. This approach, known as CLIP, introduces randomness to node representations, but explicitly makes graphs distinguishable by construction. By contrast, we study random features produced by RNI, which (i)~are not designed a priori to distinguish nodes, (ii)~do not explicitly introduce a fixed underlying structure, and (iii)~yield potentially infinitely many representations for a single graph. In this more general setting, we nonetheless show that RNI adds expressive power to distinguish nodes with high probability, leads to a universality result, and performs strongly in challenging problem settings. 

\section{Summary and Outlook}
We studied the expressive power of MPNNs with RNI, and showed that these models are universal and  preserve MPNN invariance in expectation. 
We also empirically evaluated these models on carefully designed datasets, and observed that RNI improves their learning ability, but slows their convergence. 
Our work delivers a theoretical result, supported by practical insights, to quantify the effect of RNI on GNNs. An interesting topic for future work is to study whether polynomial functions can be captured via efficient constructions; see, e.g., \cite{GroheLogicGNN} for related open problems.

\section*{Acknowledgments}
This work was supported by the Alan Turing Institute under the UK EPSRC grant EP/N510129/1, by the AXA Research Fund, and by the EPSRC grants EP/R013667/1 and EP/M025268/1. Ralph Abboud is funded by the Oxford-DeepMind Graduate Scholarship and the Alun Hughes Graduate Scholarship. Experiments  were conducted on the Advanced Research Computing (ARC) cluster administered by the University of Oxford.

\bibliographystyle{named}
{\small
\bibliography{main}
}
\appendix
\section{Appendix}
\label{sec:app}

\subsection{Propositional Logic}
We briefly present propositional logic, which underpins the dataset generation. Let $S$ be a (finite) set $S$ of propositional variables. A \emph{literal} is defined as $v$, or $\bar{v}$ (resp., $\neg v$), where $v \in S$. A disjunction of literals is a \emph{clause}. The \emph{width} of a clause is defined as the number of literals it contains. 
A formula $\phi$ is in \emph{conjunctive normal form (\CNF)} if it is a conjunction of clauses. A  \CNF has width $k$ if it contains clauses of width at most $k$, and is referred to as a $k-$\CNF. To illustrate, the formula ${\phi=(x_1 \lor \neg x_3) \land (x_4 \lor x_1)}$ is a \CNF with clauses of width 2.

An assignment $\nu: S \mapsto \{0,1\}$ maps variables to False (0), or True (1), and \emph{satisfies} $\phi$, which we denote by $\nu\models \phi$, in the usual sense, where $\models$ is propositional entailment. 
Given a propositional formula $\phi$, the \emph{satisfiability problem}, commonly known as \SAT, consists of determining whether $\phi$ admits a satisfying assignment, and is NP-complete \cite{cook1971complexity}. 

\subsection{Proof of \Cref{theo:uni}}

We first prove a Boolean version of the theorem.

\begin{lemma}\label{lem:boolean}
  Let $n\ge 1$, and let $f:\CG_n\to\{0,1\}$ be an
  invariant
  Boolean
  function. Then, for all $\epsilon,\delta>0$ there is a MPNN with RNI
  that $(\epsilon,\delta)$-approximates $f$.
\end{lemma}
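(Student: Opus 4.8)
The plan is to combine the logical characterization of MPNNs from \cite{BarceloKM0RS20} with a randomized \emph{individualization} argument. First I would fix a finite palette of $m$ colors and let RNI assign each vertex an independent, uniformly random color from $\{1,\dots,m\}$, encoded as a one-hot vector (this is where the $O(n^2\delta^{-1})$ dimensionality will come from). The guiding observation is that once all $\le n$ vertices receive \emph{distinct} colors — call such a colored graph individualized — the two-variable counting logic $\LC^2$ already suffices to pin the graph down exactly: for each present color $c$ one writes $\exists^{=1}x\,P_c(x)$, and for each ordered pair of colors $(c,c')$ one records adjacency by a clause such as $\forall x\,(P_c(x)\to\exists y\,(P_{c'}(y)\wedge E(x,y)))$, together with its negation for non-edges. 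Because colors are unique, two reused variables $x,y$ are enough to address any pair of vertices and test adjacency, so the conjunction $\phi_{G'}$ of all these clauses is an $\LC^2$ sentence satisfied precisely by the colored graphs isomorphic to the individualized graph $G'$.

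Next I would assemble a single sentence for $f$. Since $f$ is invariant, its accepting set $\mathcal A=\{G\in\CG_n : f(G)=1\}$ is a union of isomorphism classes, hence finite up to isomorphism; and with only $m$ colors there are finitely many individualized colorings of the graphs in $\mathcal A$. I would therefore take $\Phi=\bigvee_{G'}\phi_{G'}$, the disjunction over all individualized colorings $G'$ of all graphs in $\mathcal A$, which is again an $\LC^2$ sentence. An individualized colored graph $H'$ then satisfies $\Phi$ iff its underlying uncolored graph lies in $\mathcal A$: forgetting colors from any $G'$ in the disjunction recovers a graph in $\mathcal A$ (no false positives), and every individualization of an accepting graph is listed (no false negatives). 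Invoking the theorem of \cite{BarceloKM0RS20}, there is an MPNN (ACR-GNN) $\mathcal N$ computing $\Phi$ on colored inputs.

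Finally I would run $\mathcal N$ on the RNI-colored input and control the failure probability. A birthday-type bound gives that the $\le n$ random colors collide with probability at most $\binom{n}{2}/m$, so choosing $m\ge\binom{n}{2}\delta^{-1}$, i.e. $m=O(n^2\delta^{-1})$, makes the input individualized with probability at least $1-\delta$. Conditioned on individualization, $\mathcal N$ outputs exactly $f(G)\in\{0,1\}$, so $|f(G)-\mathcal N(G)|=0\le\epsilon$; on the complementary event, of probability at most $\delta$, the output may be arbitrary. Hence $\Pr\big(|f(G)-\mathcal N(G)|\le\epsilon\big)\ge 1-\delta$ for every $\epsilon,\delta>0$, which is the claimed $(\epsilon,\delta)$-approximation.

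The main obstacle I anticipate is obtaining a \emph{single} $\LC^2$ sentence rather than a per-instance construction. This forces the color palette to be finite, so that the disjunction over individualizations stays finite, and it requires cleanly decoupling the two roles of the randomness: it is used only to individualize with high probability, while the correctness of the output, \emph{conditioned} on individualization, is deterministic and governed entirely by $\Phi$. Verifying that distinct colors make two variables sufficient, and that forgetting colors never maps a non-accepting graph into the disjunction, are the delicate points; the birthday estimate and the appeal to \cite{BarceloKM0RS20} are then routine.
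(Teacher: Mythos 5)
Your proposal is correct and follows essentially the same route as the paper: individualize the graph by random colors with a birthday-type failure bound of $O(n^2\delta^{-1})$, identify each individualized colored graph by an $\LC^2$-sentence, take the finite disjunction over (colorings of) accepting graphs, and invoke the result of \cite{BarceloKM0RS20} to realize that sentence as an MPNN. The only difference is that the paper starts from a standard continuous RNI (a single uniform value in $[0,1]$ per vertex) and shows in its Lemma~\ref{lem:prob} that the network's first layer can itself discretize this into distinct Boolean color vectors, a step your discrete one-hot palette assumption sidesteps.
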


To prove this lemma, we use a logical characterization of the
expressiveness of MPNNs, which we always assume to admit global
readouts. Let $\LC$ be the extension of first-order predicate logic using
counting quantifiers of the form $\exists^{\ge k}x$ for $k\ge 0$,
where $\exists^{\ge k}x\phi(x)$ means that there are at least $k$
elements $x$ satisfying $\phi$.

For example, consider the formula
\begin{equation}
  \label{eq:a1}
  \phi(x):=\neg\exists^{\ge 3}y\big(
  E(x,y)\wedge \exists^{\ge 5} z E(y,z)\big).
\end{equation}
This is a formula in the language of graphs; $E(x,y)$ means that there
is an edge between the nodes interpreting $x$ and $y$.
For a graph $G$ and a vertex $v\in V(G)$, we have
$G\models\phi(v)$ (``$G$ satisfies $\phi$ if the variable $x$ is
interpreted by the vertex $v$") if and only if $v$ has at most $2$
neighbors in $G$ that have degree at least $5$.

We will not only
consider formulas in the language of graphs, but also formulas in
the \emph{language of colored graphs}, where in addition to the binary
edge relation we also have unary relations, that is, sets of
nodes, which we may view as colors of the nodes. For example, the
formula
\[
  \psi(x):=\exists^{\ge 4}y\big(E(x,y)\wedge \textit{RED}(y)\big)
\]
says that node $x$ has at least $4$ red neighbors (more precisely,
neighbors in the unary relation $\textit{RED}$). Formally, we assume
we have fixed infinite list $R_1,R_2,\ldots$ of color symbols that we
may use in our formulas. Then a \emph{colored graph} is a graph together
with a mapping that assigns a finite set $\rho(v)$ of colors $R_i$ to
each vertex (so we allow one vertex to have more than one, but only
finitely many, colors).

A \emph{sentence} (of the logic $\LC$ or any other logic) is a formula
without free variable. Thus a sentence expresses a property of a
graph, which we can also view as a Boolean function. For a sentence
$\phi$ we denote this function by $\llbracket\phi\rrbracket$. If
$\phi$ is a sentence in the language of (colored) graphs, then for
every (colored) graph
$G$ we have  $\llbracket\phi\rrbracket(G)=1$ if $G\models\phi$ and
$\llbracket\phi\rrbracket(G)=0$ otherwise.

It is easy to see that $\LC$ is only a syntactic extension of first
order logic $\FO$---for every $\LC$-formula there is a logically
equivalent $\FO$-formula. To see this, note that we can simulate   
$\exists^{\ge k}x$ by $k$ ordinary existential quantifiers:
$\exists^{\ge k}x$ is equivalent to $\exists x_1\ldots\exists
x_k\Big(\bigwedge_{1\le i<j\le k}x_i\neq x_j\wedge\bigwedge_{1\le i\le
  k}\phi(x_i)\Big)$. However, counting quantifiers add expressiveness if we restrict the number of
variables. The $\LC$-formula $\exists^{\ge k}x\;(x=x)$ (saying that there
are at least $k$ vertices) with just one variable is not equivalent to any $\FO$-formula
using less than $k$ variables. By $\LC^k$ we denote the fragment of
$\LC$ consisting of all formulas with at most $k$ variables.

For example, the formula $\phi(x)$ in (\ref{eq:a1}) is in $\LC^3$, but
not in $\LC^2$. But $\phi(x)$ is equivalent to the following formula
$\phi'(x)$ in $\LC^2$:
\begin{equation}
  \label{eq:a2}
  \phi'(x):=\neg\exists^{\ge 3}y\big(
  E(x,y)\wedge \exists^{\ge 5} x E(y,x)\big).
\end{equation}
The fragments $\LC^k$ are interesting for us, because their
expressiveness corresponds to that of $(k-1)$-WL and hence to that of
$k$-GNNs. More precisely, for all $k\ge 2$, two graphs $G$ and $H$
satisfy the same $\LC^{k}$-sentences if and only if $(k-1)$-WL does
not distinguish them \cite{CaiFI92}. By the results of \cite{MorrisAAAI19,Keyulu18} this implies,
in particular, that two graphs are indistinguishable by all MPNNs if
and only if they satisfy the same $\LC^2$-sentences.
\cite{BarceloKM0RS20} strengthened this result and showed that every
$\LC^2$-sentence can be simulated by an MPNN. 

\begin{lemma}[\cite{BarceloKM0RS20}]\label{lem:barcelo}
  For every $\LC^2$-sentence $\phi$ and every $\epsilon>0$ there is an
  MPNN that $\epsilon$-approximates $\llbracket\phi\rrbracket$.
\end{lemma}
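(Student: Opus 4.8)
The plan is to prove the lemma by \emph{structural induction} on the $\LC^2$-sentence $\phi$, establishing the stronger statement that every $\LC^2$-\emph{formula} $\psi(x)$ with at most one free variable can be emulated by an MPNN in the following sense: after finitely many message-passing rounds, each node $v$ of any colored graph $G$ carries a designated feature coordinate that lies within a prescribed tolerance of the indicator $\llbracket\psi\rrbracket(v):=[\,G\models\psi(v)\,]$. A sentence is then the special case of a closed formula, whose single Boolean value is read off by one global readout at the end. Working with a tolerance rather than exact equality is what turns the construction into an $\epsilon$-approximation.

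For the base case, the one-variable atomic formulas are exactly the color predicates $R_i(x)$, and by definition these are encoded in the initial node features, while $x=x$ is the constant $1$; the binary atoms $E(x,y)$ and $x=y$ are deferred to the counting step, the only place where information travels between nodes. For the inductive step on the Boolean connectives $\neg,\wedge,\vee$, I would, given coordinates approximating $\llbracket\psi_1\rrbracket$ and $\llbracket\psi_2\rrbracket$ at each node, append a small MLP realizing the corresponding gate. A one-hidden-layer network can approximate any continuous function on a compact set arbitrarily well, and in particular can implement a piecewise-linear \emph{clean-up} map that sends values near $0$ and near $1$ back to crisp $0/1$ outputs, so the combined coordinate again approximates the target indicator to any desired tolerance.

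The heart of the argument is the counting quantifier $\phi(x):=\exists^{\ge k}y\,\psi(x,y)$. Because only two variables are available, after replacing each maximal one-variable subformula by a fresh color, $\psi(x,y)$ becomes a Boolean combination of unary predicates in $x$, unary predicates in $y$, and the two genuinely binary atoms $E(x,y)$ and $x=y$. The unary parts are already present as node features by the induction hypothesis, so I can express the count $\#\{y:G\models\psi(v,y)\}$ as a sum of three pieces: the diagonal term $[\,G\models\psi(v,v)\,]$, computed from $v$'s own features; the contribution of the \emph{neighbors} of $v$ (the case $E(v,y),\,y\ne v$), obtained by ordinary neighborhood aggregation of a node feature; and the contribution of the \emph{non-neighbors}, obtained as the \emph{global} sum over all $y$ (via the global readout) minus the neighbor and diagonal contributions. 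A thresholding MLP then converts this integer count into the indicator of ``$\ge k$''; since the count is an integer, a piecewise-linear gate realizes this threshold essentially exactly. The top-level sentence is reached by one further such counting step over all nodes together with Boolean gates, producing the constant invariant $\llbracket\phi\rrbracket(G)$. As $\phi$ is fixed, the construction composes only finitely many approximation steps, so choosing each tolerance small enough forces the total deviation below $\epsilon$.

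The step I expect to be the main obstacle is controlling \emph{error propagation through the aggregations}. Each counting quantifier sums node coordinates over potentially all vertices, across graphs of unbounded order, so even tiny per-node deviations from crisp $0/1$ values could accumulate and corrupt the integer count. The remedy is to require every inductive stage to terminate with a clean-up gate that provably drives its feature close to $\{0,1\}$ before any summation occurs, ensuring each count is recovered exactly and the subsequent threshold is applied to a value bounded away from its decision boundary. Propagating these tolerances consistently down the fixed, finite parse tree of $\phi$ — so that the slack demanded upstream is always deliverable downstream — is the delicate bookkeeping at the core of the proof.
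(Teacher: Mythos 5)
The paper does not prove this lemma at all: it is imported as a black box from \cite{BarceloKM0RS20}, whose main positive result (ACR-GNNs capture every $\LC^2$ classifier) is exactly this statement. Your proposal is correct and is essentially a reconstruction of that reference's argument: structural induction on the formula, replacement of maximal one-free-variable subformulas by fresh unary predicates so that $\exists^{\ge k}y\,\psi(x,y)$ reduces to a diagonal term, a neighbor count (local aggregation), and a non-neighbor count (global readout minus the other two), followed by piecewise-linear thresholding of the resulting integer count. Your closing observation is also the right resolution of the only delicate point: with truncated-linear gates each feature can be snapped to an \emph{exact} value in $\{0,1\}$ before any summation, so the counts remain exact uniformly over graphs of unbounded order and no error accumulates through the aggregations.
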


Since here we are talking about deterministic MPNNs, there is no
randomness involved, and we just say \emph{``$\epsilon$-approximates''}
instead of ``$(\epsilon,1)$-approximates''.

Lemma~\ref{lem:barcelo} not only holds for sentences in the language
of graphs, but also for sentences in the language of colored
graphs. Let us briefly discuss the way MPNNs access such colors. We
encode the colors using one-hot vectors that are part of the
initial states of the nodes. For example, if we have a formula that
uses color symbols among $R_1,\ldots,R_k$, then we reserve $k$
places in the initial state $\vec x_v=(x_{v1},\ldots,x_{v\ell})$ of each
vertex $v$ (say, for convenience, $x_{v1},\ldots,x_{vk}$) and we
initialize $\vec x_v$ by letting $x_{vi}=1$ if $v$ is in $R_i$ and
$x_{vi}=0$ otherwise.

Let us call a colored graph $G$ \emph{individualized} if for any two
distinct vertices $v,w\in V(G)$ the sets $\rho(v),\rho(w)$ of colors
they have are distinct.
Let us say that a sentence $\chi$ \emph{identifies} a (colored) graph $G$ if for
all (colored) graphs $H$ we have $H\models\chi$ if and only if $H$ is
isomorphic to $G$.

\begin{lemma}
  For every individualized colored graph $G$ there is a
  $\LC^2$-sentence $\chi_G$ that identifies $G$. 
\end{lemma}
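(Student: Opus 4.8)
The plan is to exploit the individualization directly: since distinct vertices of $G$ carry distinct color sets, each vertex can be \emph{named} by a one-variable formula describing its colors, and this is exactly what lets us stay inside the two-variable fragment $\LC^2$. Write $V(G)=\{v_1,\dots,v_n\}$ and let $C=\bigcup_{i=1}^n\rho(v_i)$ be the (finite) set of colors occurring in $G$. For each $i$ I would introduce the one-variable formula
\[
  \alpha_i(x):=\bigwedge_{R\in\rho(v_i)}R(x)\wedge\bigwedge_{R\in C\setminus\rho(v_i)}\neg R(x),
\]
which asserts that $x$ carries exactly the colors of $v_i$ among those in $C$. Because $G$ is individualized, the $\alpha_i$ are pairwise mutually exclusive, so in any colored graph every vertex satisfies at most one $\alpha_i$.

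Next I would assemble $\chi_G$ as a finite conjunction of three kinds of $\LC^2$ statements. First, a counting part $\bigwedge_{i=1}^n\exists^{=1}x\,\alpha_i(x)$, where $\exists^{=1}x\,\varphi$ abbreviates $\exists^{\ge 1}x\,\varphi\wedge\neg\exists^{\ge 2}x\,\varphi$; this forces exactly one vertex of each color type $\rho(v_i)$. Second, a covering part $\forall x\,\bigvee_{i=1}^n\alpha_i(x)$, ruling out vertices of any other color type. Third, an adjacency part: for each pair $i\ne j$ I would add $\forall x\forall y\big(\alpha_i(x)\wedge\alpha_j(y)\to E(x,y)\big)$ when $v_iv_j\in E(G)$, and the same with $\neg E(x,y)$ otherwise. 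Every conjunct uses at most the two variables $x,y$ (each $\alpha_i$ is single-variable), so the whole sentence lies in $\LC^2$; its size is polynomial in $n$ and $|C|$, consistent with the embedding bound quoted earlier.

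For correctness I would argue both directions. If $H\cong G$, then any isomorphism is color-preserving, so all three parts transfer verbatim and $H\models\chi_G$. Conversely, suppose $H\models\chi_G$. The counting and covering parts yield vertices $w_1,\dots,w_n$ that exhaust $V(H)$, with $w_i$ the \emph{unique} vertex satisfying $\alpha_i$, and these are distinct since the $\alpha_i$ are mutually exclusive. The map $v_i\mapsto w_i$ is then a color-preserving bijection, and the adjacency part guarantees it preserves edges and non-edges, hence is an isomorphism of colored graphs. Thus $\chi_G$ identifies $G$.

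The step I expect to require the most care is staying inside the two-variable fragment: naively, pinning down an $n$-vertex graph up to isomorphism seems to demand naming all $n$ vertices at once, which would need $n$ variables. Individualization is precisely what circumvents this, since each vertex is captured by a single-variable color formula $\alpha_i$, so no conjunct ever refers to more than two vertices simultaneously. A secondary caveat is that $\chi_G$ can mention only the finitely many colors in $C$, so strictly speaking it identifies $G$ among colored graphs over this fixed palette, which is exactly the regime produced by the RNI construction used in the main theorem.
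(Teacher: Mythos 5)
Your construction is essentially the paper's: the same one-variable color formulas $\alpha_v$, the same counting conjuncts forcing each color type to be realized exactly once, and two-variable adjacency conjuncts (yours in universal form, the paper's in existential form --- equivalent once uniqueness is enforced). The only substantive difference is your covering clause $\forall x\,\bigvee_{i}\alpha_i(x)$, which the paper's $\chi_G$ omits but which is genuinely needed to exclude graphs containing an extra vertex of a color type not occurring in $G$, so your version is if anything slightly more careful than the one given in the appendix.
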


\begin{proof}
  Let $G$ be an individualized graph. For every vertex $v\in V(G)$,
  let
  \[
    \alpha_v(x):=\bigwedge_{R\in\rho(v)}R(x)\wedge\bigwedge_{R\in\{R_1,\ldots,R_k\}\setminus\rho(x)}\neg
    R(x).
  \]
  Then $v$ is the unique vertex of $G$ such that
  $G\models\alpha_v(v)$. For every pair $v,w\in V(G)$ of
  vertices, we let
  \[
    \beta_{vw}(x,y):=
    \begin{cases}
      \alpha_v(x)\wedge\alpha_w(y)\wedge E(x,y)&\text{if }(v,w)\in
      E(G),\\
      \alpha_v(x)\wedge\alpha_w(y)\wedge \neg E(x,y)&\text{if }(v,w)\not\in
      E(G).
    \end{cases}
  \]
  We let
  \begin{align*}
    \chi_G:=&\bigwedge_{v\in V(G)}\big(\exists
    x\alpha_v(x)\wedge\neg\exists^{\ge
      2}x\alpha_v(x)\big)~\wedge \\
      &\bigwedge_{v,w\in V(G)}\exists x\exists y\beta_{vw}(x,y).
  \end{align*}
  It is easy to see that $\chi_G$ identifies $G$.
\end{proof}

For $n,k\in\Nat$, we let $\CG_{n,k}$ be the class of
all individualized colored graphs that only use colors among $R_1,\ldots,R_k$.

\begin{lemma}
  Let $h:\CG_{n,k}\to\{0,1\}$ be an invariant Boolean function. Then
  there exists a $\LC^2$-sentence $\psi_h$ such that for all
  $G\in\CG_{n,k}$ it holds that $\llbracket\psi_h\rrbracket(G)=h(G)$.
\end{lemma}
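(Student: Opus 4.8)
The plan is to reduce $h$ to a finite disjunction of the identifying sentences $\chi_G$ supplied by the preceding lemma. The crucial observation is that $\CG_{n,k}$ contains only \emph{finitely many} graphs up to isomorphism: every graph in $\CG_{n,k}$ has at most $n$ vertices, and each vertex carries one of the $2^k$ possible subsets of $\{R_1,\ldots,R_k\}$ as its color set, so there are only finitely many isomorphism types of such colored graphs. First I would fix representatives $G_1,\ldots,G_m$ of exactly those isomorphism classes $G\in\CG_{n,k}$ with $h(G)=1$.

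Next I would define
$$\psi_h := \bigvee_{i=1}^{m}\chi_{G_i},$$
where each $\chi_{G_i}$ is the $\LC^2$-sentence identifying $G_i$ given by the previous lemma. (If no graph satisfies $h(G)=1$, i.e.\ $m=0$, take $\psi_h$ to be any unsatisfiable $\LC^2$-sentence, such as $\exists x\,\neg(x=x)$.) Since a finite disjunction of sentences reuses the same two variables $x,y$ and introduces no new ones, $\psi_h$ is again a genuine $\LC^2$-sentence.

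It then remains to verify correctness. For any $G\in\CG_{n,k}$ we have $G\models\psi_h$ if and only if $G\models\chi_{G_i}$ for some $i$, which by the identifying property of $\chi_{G_i}$ holds if and only if $G$ is isomorphic to some $G_i$. Because $h$ is invariant, being isomorphic to one of the chosen representatives is equivalent to $h(G)=1$, and hence $\llbracket\psi_h\rrbracket(G)=h(G)$ for all $G\in\CG_{n,k}$, as required.

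I expect no serious obstacle here: the argument is a finiteness observation rather than a compactness or interpolation argument. The only two points needing care are to confirm that $\CG_{n,k}$ is finite up to isomorphism, so that the disjunction is finite and therefore a well-formed $\LC^2$-sentence, and to use invariance of $h$ precisely at the step where the value of $h$ is determined by the isomorphism type of $G$.
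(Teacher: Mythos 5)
Your proposal is correct and follows essentially the same route as the paper: both form $\psi_h$ as the finite disjunction of the identifying sentences $\chi_G$ over (representatives of) the isomorphism classes with $h(G)=1$, relying on the finiteness of $\CG_{n,k}$ up to isomorphism. Your version is slightly more explicit about the empty case and the correctness verification, but there is no substantive difference.
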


\begin{proof}
  Let $\mathcal H\subseteq\CG_{n,k}$ be the subset consisting of all graphs
  $H$ with $h(H)=1$. We let
  \[
    \psi_h:=\bigvee_{H\in\mathcal H}\chi_H.
  \]
  We eliminate duplicates in the disjunction. Since up to isomorphism,
  the class $\CG_{n,k}$ is finite, this makes the disjunction finite and
  hence $\psi_h$ well-defined.
\end{proof}

The \emph{restriction} of a colored graph $G$ is the underlying plain
graph, that is, the graph $G^\vee$ obtained from the colored graph $G$ by
forgetting all the colors. Conversely, a colored graph $G^\wedge$ is an
\emph{expansion} of a plain graph $G$ if $G=(G^\wedge)^\vee$. %

\begin{corollary}\label{cor:inv}
  Let $f:\CG_{n}\to\{0,1\}$ be an invariant Boolean function. Then
  there exists a $\LC^2$-sentence $\phi^\wedge_f$ (in the language
  of colored graphs) such that for all $G\in\CG_{n,k}$ it holds that
  $\llbracket\psi^\wedge_f\rrbracket(G)=f(G^\vee)$.
\end{corollary}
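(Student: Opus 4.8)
The plan is to obtain \Cref{cor:inv} as an immediate consequence of the preceding lemma by transporting the plain-graph function $f$ to a function on individualized colored graphs. First I would define $h:\CG_{n,k}\to\{0,1\}$ by $h(G):=f(G^\vee)$, i.e.\ $h$ evaluates $f$ on the underlying plain graph obtained by forgetting the colors of $G$. The desired sentence $\phi^\wedge_f$ will then simply be the $\LC^2$-sentence $\psi_h$ produced by the preceding lemma for this particular $h$.

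The only point that genuinely needs verification is that $h$ is an \emph{invariant} Boolean function on $\CG_{n,k}$, since invariance is exactly the hypothesis required to apply that lemma. To check this, suppose $G_1,G_2\in\CG_{n,k}$ are isomorphic as colored graphs. Any color-preserving isomorphism witnessing $G_1\cong G_2$ is, after discarding the color constraints, an isomorphism of the underlying plain graphs, so $G_1^\vee\cong G_2^\vee$; because $f$ is invariant this yields $f(G_1^\vee)=f(G_2^\vee)$, that is, $h(G_1)=h(G_2)$. Hence $h$ is invariant, and applying the preceding lemma to $h$ supplies a $\LC^2$-sentence $\psi_h$ with $\llbracket\psi_h\rrbracket(G)=h(G)$ for every $G\in\CG_{n,k}$. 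Setting $\phi^\wedge_f:=\psi_h$ and unfolding the definition of $h$ gives $\llbracket\phi^\wedge_f\rrbracket(G)=h(G)=f(G^\vee)$ for all $G\in\CG_{n,k}$, which is precisely the claim.

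I do not expect a real obstacle here: this corollary is routine bookkeeping, and the single substantive observation — that passing to the restriction $G^\vee$ is compatible with invariance — follows directly from the definitions of colored-graph isomorphism and of the restriction operation. All of the actual content has already been established in showing that every invariant Boolean function on individualized colored graphs is $\LC^2$-definable (through the finite disjunction $\bigvee_{H}\chi_H$ of identifying sentences). The role of \Cref{cor:inv} is merely to re-express an invariant function on \emph{plain} graphs inside this colored framework, so that it can subsequently be paired with an RNI construction that, with high probability, supplies the individualizing colors needed to feed into \Cref{lem:barcelo}.
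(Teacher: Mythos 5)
Your proposal is correct and coincides with the paper's intended argument: the paper states \Cref{cor:inv} without proof precisely because it follows by setting $h := f\circ(\cdot)^\vee$ on $\CG_{n,k}$, checking that $h$ inherits invariance from $f$ (a color-preserving isomorphism is in particular an isomorphism of the underlying plain graphs), and applying the preceding lemma to $h$. Your verification of the invariance of $h$ is exactly the one substantive step, and your reading of the corollary's role in the overall RNI construction is also accurate.
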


Towards proving Lemma~\ref{lem:boolean}, we fix an $n\ge 1$ and a
$\epsilon,\delta>0$. We let 
\[
  c:=\left\lceil\frac{2}{\delta}\right\rceil
  \quad
  \text{and}
  \quad
  k:=c^2\cdot n^3
\]
The technical
details of the proof of Lemma~\ref{lem:boolean} and
Theorem~\ref{theo:uni} depend on the exact choice of the random
initialization and the activation functions used in the neural
networks, but the idea is always the same. For simplicity, we assume
that we initialize the states $\vec x_v=(x_{v1},\ldots,x_{v\ell})$ of all
vertices to $(r_v,0,\ldots,0)$, where $r_v$ for $v\in V(G)$ are chosen
independently uniformly at random from $[0,1]$. As our activation
function $\sigma$, we choose the linearized sigmoid function defined
by $\sigma(x)=0$ for $x<0$, $\sigma(x)=x$ for $0\le x<1$, and
$\sigma(x)=1$ for $x\ge 1$.

\begin{lemma}\label{lem:prob}
   Let $r_1,\ldots,r_n$ be chosen
   independently uniformly at random from the interval $[0,1]$.
   For $1\le i\le n$  and $1\le j\le c\cdot n^2$, let
   \[
     s_{ij}:=k\cdot r_i-(j-1)\cdot\frac{k}{c\cdot n^2}.
   \]
   Then with probability greater than $1-\delta$, the following
   conditions are satisfied.
   \begin{enumerate}
   \item[(i)] For all $i\in\{1,\ldots,n\},j\in\{1,\ldots,c\cdot n^2\}$ it
     holds that $\sigma(s_{ij})\in\{0,1\}$.
   \item[(ii)] For all distinct $i,i'\in\{1,\ldots,n\}$ there exists a
     $j\in\{1,\ldots,c\cdot n^2\}$ such that
     $\sigma(s_{ij})\neq\sigma(s_{i'j})$.
   \end{enumerate}
\end{lemma}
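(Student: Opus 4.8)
The plan is to reinterpret the numbers $s_{ij}$ geometrically. Since $k/(cn^2)=cn$, we can write $s_{ij}=k(r_i-p_j)$ with grid points $p_j:=(j-1)/(cn^2)$, which are equally spaced by $1/(cn^2)$ and sweep out $[0,1-1/(cn^2)]$ as $j$ runs from $1$ to $cn^2$. Because $k=c^2n^3$ is large, $\sigma(k(r_i-p_j))$ is a sharpened step in $r_i$: it equals $0$ when $r_i\le p_j$, equals $1$ when $r_i\ge p_j+1/k$, and ramps linearly on the intervening window of width $1/k$. Thus condition (i) for the pair $(i,j)$ fails exactly when $r_i$ lands in the open ramp window $(p_j,p_j+1/k)$.

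For condition (i), I would fix $i$ and observe that the $cn^2$ ramp windows are pairwise disjoint (their spacing $1/(cn^2)$ exceeds their width $1/k$) with total length $cn^2\cdot(1/k)=1/(cn)$; a union bound over $i$ then gives $\Pr[\neg(\mathrm{i})]\le n\cdot 1/(cn)=1/c$, which is at most $\delta/2$ by the choice $c=\lceil 2/\delta\rceil$.

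For condition (ii), the key step is to notice that, once (i) holds, each $\sigma(s_{ij})\in\{0,1\}$ and the vector $(\sigma(s_{ij}))_{j}$ is a thermometer code: it is nonincreasing in $j$ (as $s_{ij}$ decreases in $j$), so it is completely determined by $N_i:=\#\{j:r_i\ge t_j\}$, the number of thresholds $t_j:=p_j+1/k$ lying at or below $r_i$. Hence the codes of $i$ and $i'$ agree, and (ii) fails for this pair, precisely when $N_i=N_{i'}$, i.e.\ when $r_i$ and $r_{i'}$ fall in the same cell $R_m$ of the partition of $[0,1]$ induced by the thresholds $t_1<\cdots<t_{cn^2}$. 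Each such cell has length at most $1/(cn^2)$, so for any fixed pair $\Pr[r_i,r_{i'}\text{ in the same cell}]=\sum_m|R_m|^2\le (1/(cn^2))\sum_m|R_m|=1/(cn^2)$. A union bound over the fewer than $n^2/2$ pairs yields $\Pr[(\mathrm i)\wedge\neg(\mathrm{ii})]<\tfrac12\cdot 1/c\le\delta/4$.

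Combining the two estimates gives overall failure probability at most $\delta/2+\delta/4<\delta$, as required. I expect the main obstacle to be the middle step: cleanly identifying the abstract condition (ii) with the geometric statement that some grid point separates $r_i$ from $r_{i'}$ (via the thermometer-code/monotonicity observation), and checking that the boundary cells $R_0=[0,t_1)$ and $R_{cn^2}=[t_{cn^2},1]$ also have length below $1/(cn^2)$ so that the per-cell bound is uniform. The probabilistic bookkeeping is then routine, and the constants $c=\lceil2/\delta\rceil$ and $k=c^2n^3$ are exactly tuned so that both union bounds are simultaneously controlled.
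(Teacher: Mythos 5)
Your proof is correct and is essentially the paper's own argument: the same characterization of when $\sigma(s_{ij})$ is fractional, the same observation that under (i) the bit-vector of node $i$ is determined by which length-$\le 1/(cn^2)$ cell (block) contains $r_i$, and the same two union bounds with the same constants. The only difference is cosmetic: you work directly with the continuous partition of $[0,1]$ by the thresholds $t_j$, whereas the paper first discretizes via $p_i=\lfloor k r_i\rfloor$ and reasons over blocks of integers; your thermometer-code/monotonicity phrasing and the check of the boundary cells are both sound.
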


\begin{proof}
  For every $i$, let $p_i:=\lfloor r_i\cdot k\rfloor$. Since $k\cdot
  r_i$ is uniformly random from the interval $[0,k]$, the integer
  $p_i$ is uniformly random from $\{0,\ldots,k-1\}$. Observe that
  $0<\sigma(s_{ij})<1$ only if $p_i- (j-1)\cdot\frac{k}{c\cdot n^2}=0$ (here
  we use the fact that $k$ is divisible by $c\cdot n^2$). The probability that
  this happens is $\frac{1}{k}$. Thus, by the Union Bound,
  \begin{equation}
    \label{eq:a3}
    \Pr\big(\exists i,j:\;0<\sigma(s_{ij})<1\big)\le\frac{c\cdot n^3}{k}.
  \end{equation}
  Now let $i,i'$ be distinct and suppose that
  $\sigma(s_{ij})=\sigma(s_{i'j})$ for all $j$. Then for all $j$ we
  have $s_{ij}\le 0\iff s_{i'j}\le 0$ and therefore $\lfloor
  s_{ij}\rfloor\le 0\iff \lfloor
  s_{i'j}\rfloor\le 0$. This implies that
  \begin{align}
  \begin{split}
    &\forall j\in\{1,\ldots,c\cdot n^2\}:\\
    &\quad p_i\le
    (j-1)\cdot\frac{k}{c\cdot n^2}\iff  p_{i'}\le
    (j-1)\cdot\frac{k}{c\cdot n^2}. \label{eq:a4}
    \end{split}
  \end{align}
  Let $j^*\in\{1,\ldots, c\cdot n^2\}$ such that
  \[p_i\in\Big\{(j^*-1)\cdot\frac{k}{c\cdot
    n^2},\ldots,j^*\cdot\frac{k}{c\cdot n^2}-1\Big\}.\] Then
  by (\ref{eq:a4}) we have: \[p_i'\in
  \Big\{(j^*-1)\cdot\frac{k}{c\cdot n^2},\ldots,j^*\cdot\frac{k}{c\cdot n^2}-1\Big\}.\] As
  $p_{i'}$ is independent of $p_i$ and hence of $j^*$, the probability
  that this happens is at most $\frac{1}{k}\cdot
  \frac{k}{c\cdot n^2}=\frac{1}{c\cdot n^2}$.
  This proves that for all distinct $i,i'$ the probability that 
  $\sigma(s_{ij})=\sigma(s_{i'j})$ is at most $\frac{1}{c\cdot
    n^2}$. Hence, again by the Union Bound,
  \begin{equation}
    \label{eq:a5}
    \Pr\big(\exists i\neq i'\forall
    j:\;\sigma(s_{ij})=\sigma(s_{i'j})\big)\le \frac{1}{c}.
  \end{equation}
  (\ref{eq:a3}) and (\ref{eq:a5}) imply that the probability that either (i) or (ii) is
  violated is at most
  \[
    \frac{c\cdot n^3}{k}+\frac{1}{c}\le \frac{2}{c}\le\delta.
    \qedhere
  \]
\end{proof}

\begin{proof}[Proof of Lemma~\ref{lem:boolean}]
   For given function $f:\CG_n\to\{0,1\}$, we choose the sentence
   $\psi^\wedge_f$ according to Corollary~\ref{cor:inv}. Applying
   Lemma~\ref{lem:barcelo} to this sentence and $\epsilon$, we obtain
   an MPNN $\CN_f$ that on a colored graph $G\in \CG_{n,k}$ computes
   an $\epsilon$-approximation of $f(G^\vee)$.
  
   Without loss of generality, we assume that the vertex set of the
   input graph to our MPNN is $\{1,\ldots,n\}$. We choose $\ell$ (the
   dimension of the state vectors) in such a way that
   $\ell\ge c\cdot n^2$ and $\ell$ is at least as large as the
   dimension of the state vectors of $\CN_f$. Recall that
   the state vectors are initialized as $\vec
   x^{(0)}_i=(r_i,0,\ldots,0)$ for values $r_i$ chosen independently
   uniformly at random from the interval $[0,1]$.

   In the first step, our MPNN computes the purely local
   transformation (no messages need to be passed) that maps
   $\vec x^{(0)}_i$ to
   $\vec x^{(1)}_i=(x^{(1)}_{i1},\ldots,x^{(1)}_{i\ell})$ with
   \[
     x^{(1)}_{ij}=
     \begin{cases}
       \sigma\Big(k\cdot r_i-(j-1)\cdot\frac{k}{c\cdot n^2}\Big)&\text{for }1\le j\le
       c\cdot n^2,\\
       0&\text{for }c\cdot n^2+1\le j\le\ell.
     \end{cases}
   \]
   Since we treat $k,c,n$ as constants, the mapping \[r_i\mapsto k\cdot
   r_i-(j-1)\cdot\frac{k}{c\cdot n^2}\] is just a linear mapping
   applied to $r_i=x^{(0)}_{i1}$.

   By Lemma~\ref{lem:prob}, with probability at least $1-\delta$, the
   vectors $\vec x_i^{(1)}$ are mutually distinct $\{0,1\}$-vectors,
   which we view as encoding a coloring of the input graph with colors
   from $R_1,\ldots,R_k$. Let $G^\wedge$ be the resulting colored
   graph. Since the vectors $\vec x^{(0)}_i$ are
   mutually distinct, $G^\wedge$ is individualized and thus in the
   class $\CG_{n,k}$. We now apply the MPNN $\CN_f$, and it computes a
   value $\epsilon$-close to \[\llbracket\psi_f^\wedge\rrbracket(G^\wedge)=f((G^\wedge)^\vee)=f(G).\]
 \end{proof}

 \begin{proof}[Proof of Theorem~\ref{theo:uni}]
   Let $f:\CG_n\to \Real$ be invariant. Since $\CG_n$ is finite, the range
   $Y:=f(\CG_n)$ is finite. To be precise, we have
   $N:=|Y|\le|\CG_n|=2^{\binom{n}{2}}$. 

   Say, $Y=\{y_1,\ldots,y_N\}$. For
   $i=1,\ldots,N$, let $g_i:\CG_n\to\{0,1\}$ be the Boolean function
   defined by
   \[
     g_i(G)=
     \begin{cases}
       1&\text{if }f(G)=y_i,\\
       0&\text{otherwise}.
     \end{cases}
   \]
   Note that $g_i$ is invariant. Let $\epsilon,\delta>0$ and $\epsilon':=\frac{\epsilon}{\max Y}$
   and $\delta':=\frac{\delta}{N}$. By Lemma~\ref{lem:boolean}, for
   every $i\in\{1,\ldots,N\}$ there is an MPNN with RNI $\CN_i$ that
   $(\epsilon',\delta)$-approximates $g_i$. Putting all the $\CN_i$
   together, we obtain an invariant MPNN $\CN$ that computes a
   function $g:\CG_n\to\{0,1\}^N$. We only need to apply the linear
   transformation
   \[
     \vec x\mapsto\sum_{i=1}^N x_i\cdot y_i
   \]
   to the output of $\CN$ to obtain an approximation of $f$.
 \end{proof}

\begin{remark}
  Obviously, our construction yields MPNNs with a prohibitively large state
  space. In particular, this is true for the brute force step from
  Boolean to general functions. We doubt that there are much more
  efficient approximators, after all we make no assumption whatsoever
  on the function $f$.

  The approximation of Boolean functions
  is more interesting. It may still happen that
  the GNNs get exponentially large in $n$; this seems
  unavoidable. However, the nice thing here is that our construction
  is very adaptive and tightly linked to the descriptive complexity of
  the function we want to approximate. This deserves a more thorough
  investigation, which we leave for future work.

  As opposed to other universality results for GNNs, our construction
  needs no higher-order tensors defined on tuples of nodes, with
  practically infeasible space requirements on all but very small
  graphs. Instead, the complexity of our construction goes entirely
  into the dimension of the state space. The advantage of this is that
  we can treat this dimension as a hyperparameter that we can easily
  adapt and that gives us more fine-grained control over the space
  requirements. Our experiments show that usually in practice a small
  dimension already yields very powerful networks.
\end{remark}

\begin{remark}
  In our experiments, we found that partial RNI, which assigns random values to a fraction of all node embedding vectors, often yields very good results, sometimes better than a full RNI. There is a theoretical plausibility
  to this. For most graphs, we do not
  lose much by only initializing a small fraction of vertex embeddings, because in a few message-passing rounds GNNs can propagate the randomness and individualize the full input
  graph with our construction. On the other hand, we reduce the amount of noise our models
  have to handle when we only randomize partially.
\end{remark}

\subsection{Details of Dataset Construction}
\label{app:corePair}
There is an interesting universality result for functions defined on planar graphs. It is known that 3-WL can distinguish between planar graphs \cite{KieferPS19}. Since 4-GCNs can simulate 3-WL, this implies that functions over planar graphs can be approximated by 4-GCNs. This result can be extended to much wider graph classes, including all graph classes excluding a fixed graph as a minor \cite{GroheWL}. 

Inspired by this, we generate planar instances, and ensure that they can be distinguished by 2-WL, by carefully constraining these instances further. Hence, any GNN with 2-WL expressive power can approximate solutions to these planar instances. This, however, does not imply that these GNNs will solve \EXP in practice, but only that an appropriate approximation function exists and can theoretically be learned.

\subsubsection{Construction of \EXP}

\EXP consists of two main components, (i) a pair of \emph{cores}, which are non-isomorphic, planar, 1-WL indistinguishable,  2-WL distinguishable, and decide the satisfiability of every instance, and (ii) an additional randomly generated and satisfiable \emph{planar component}, identically added to the core pair, to add variability to \EXP and make learning more challenging. We first present both components, and then provide further details about graph encoding and planar embeddings. 

\paragraph{Core pair.}

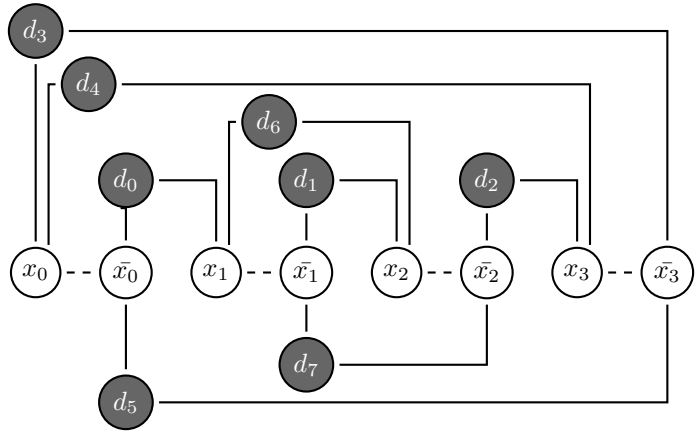
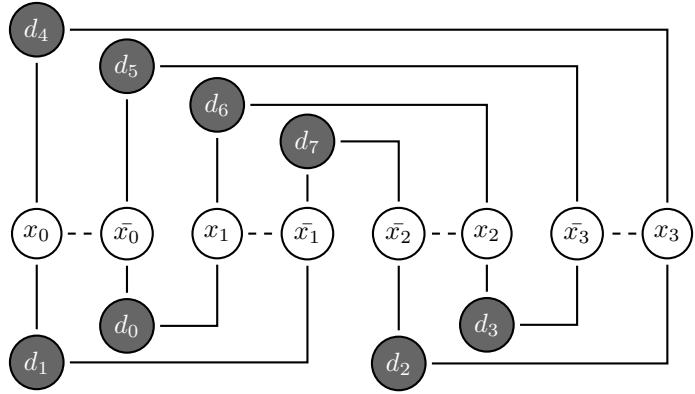
\begin{figure*}[t]  %
	\centering
	\begin{subfigure}[t]{\linewidth} 
	\centering
		\begin{tikzpicture}[node distance = 1cm,line width=0.8pt,shorten >=2pt, shorten <=2pt,-, scale=0.3]
		\tikzstyle{var} = [text width=1.2em, text centered, text=black, circle, inner sep=2pt,  draw=black, fill=white, thick]
		\tikzstyle{disj} = [text width=1.2em ,fill=gray!120,text centered, text=white, circle, inner sep=2pt,  draw=black,  thick]
		\node[var] (x0) {$x_0$};
		\node[var, right=0.5cm of x0] (x0n) {$\bar{x_0}$};
		\node[var, right = 0.5cm of x0n] (x1) {$x_1$};
		\node[var, right=0.5cm of x1] (x1n) {$\bar{x_1}$};
		\node[var, right = 0.5cm of x1n] (x2) {$x_2$};
		\node[var, right=0.5cm of x2] (x2n) {$\bar{x_2}$};
		\node[var, right = 0.5cm of x2n] (x3) {$x_3$};
		\node[var, right= 0.5cm of x3] (x3n) {$\bar{x_3}$};
		\node[disj, above= 0.5cm of  x0n] (d0) {$d_0$};
		\node[disj, above= 0.5cm of  x1n] (d1) {$d_1$};
		\node[disj, above= 0.5cm of  x2n] (d2) {$d_2$};
		\node[disj, above right = 1.5cm and 0.2 of x1] (d6) {$d_6$};
		\node[disj, above right = 2cm and 0.2cm of x0] (d4) {$d_4$};
		\node[disj, above = 2.5cm of x0] (d3) {$d_3$};
		\node[disj, below = 0.5cm  of x1n] (d7) {$d_7$};
		\node[disj, below = 1cm of x0n] (d5) {$d_5$};
		
		\draw[color=black,dashed] (x0.east) -- (x0n.west);
		\draw[color=black,dashed] (x1.east) -- (x1n.west);
		\draw[color=black,dashed] (x2.east) -- (x2n.west);
		\draw[color=black,dashed] (x3.east) -- (x3n.west);
		\draw[color=black] (x0n.north) |- (d0.south);
		\draw[color=black] (x1.north) |- (d0.east);
		\draw[color=black] (x1n.north) -- (d1.south);
		\draw[color=black] (x2.north) |- (d1.east);
		\draw[color=black] (x2n.north) -- (d2.south);
		\draw[color=black] (x3.north) |- (d2.east);
		\draw[color=black] (x0.north) -- (d3.south);
		\draw[color=black] (x3n.north) |- (d3.east);
		
		\draw[color=black] (x0.60) |- (d4.west);
		\draw[color=black] (x3.60) |- (d4.east);
		\draw[color=black] (x0n.south) -- (d5.north);
		\draw[color=black] (x3n.south) |- (d5.east);
		
		\draw[color=black] (x1.60) |- (d6.west);
		\draw[color=black] (x2.60) |- (d6.east);
		\draw[color=black] (x1n.south) --  (d7.north);
		\draw[color=black] (x2n.south) |- (d7.east);
		\end{tikzpicture}
		\caption{The encoding of the formula $\phi_1$.}\label{fig:phi1}		
	\end{subfigure}
	\\ \vspace{0.25cm}
	\begin{subfigure}[t]{\linewidth}  
	\centering
		\begin{tikzpicture}[node distance = 1cm,line width=0.8pt,shorten >=2pt, shorten <=2pt,-, scale=0.3]
		\tikzstyle{var} = [text width=1.2em, text centered, text=black, circle, inner sep=2pt,  draw=black, fill=white,  thick]
		\tikzstyle{disj} = [text width=1.2em ,fill=gray!120,text centered, text=white, circle, inner sep=2pt,  draw=black,  thick]
		\node[var] (x0) {$x_0$};
		\node[var, right=0.5cm of x0] (x0n) {$\bar{x_0}$};
		\node[var, right =0.5cm of x0n] (x1) {$x_1$};
		\node[var, right=0.5cm of x1] (x1n) {$\bar{x_1}$};
		\node[var, right =0.5cm of x1n] (x2n) {$\bar{x_2}$};
		\node[var, right=0.5cm of x2] (x2) {$x_2$};
		\node[var, right = 0.5cm of x2] (x3n) {$\bar{x_3}$};
		\node[var, right= 0.5cm of x3] (x3) {$x_3$} ;
		\node[disj, below = 0.5cm of  x0n] (d0) {$d_0$};
		\node[disj, below = 1cm of  x0] (d1) {$d_1$};
		\node[disj, below = 0.5cm of  x2] (d3) {$d_3$};
		\node[disj, below = 1cm of  x2n] (d2) {$d_2$};
		\node[disj, above = 0.5cm of x1n] (d7) {$d_7$};
		\node[disj, above = 1cm of x1] (d6) {$d_6$};
		\node[disj, above= 1.5cm of x0n] (d5) {$d_5$};
		\node[disj, above= 2cm of x0] (d4) {$d_4$};
		
		\draw[color=black,dashed] (x0.east) -- (x0n.west);
		\draw[color=black,dashed] (x1.east) -- (x1n.west);
		\draw[color=black,dashed] (x2.west) -- (x2n.east);
		\draw[color=black,dashed] (x3.west) -- (x3n.east);
		\draw[color=black] (x0.north) -- (d4.south);
		\draw[color=black] (x0.south) -- (d1.north);
		\draw[color=black] (x0n.north) -- (d5.south);		
	    \draw[color=black] (x0n.south) -- (d0.north);
	    \draw[color=black] (x1.north) -- (d6.south);		
		\draw[color=black] (x1.south) |- (d0.east);
        \draw[color=black] (x1n.north) -- (d7.south);		
		\draw[color=black] (x1n.south) |- (d1.east);	
	    \draw[color=black] (x2.north) |- (d6.east);		
		\draw[color=black] (x2.south) -- (d3.north);
        \draw[color=black] (x2n.north) |- (d7.east);		
		\draw[color=black] (x2n.south) -- (d2.north);
	    \draw[color=black] (x3.north) |- (d4.east);		
		\draw[color=black] (x3.south) |- (d2.east);
        \draw[color=black] (x3n.north) |- (d5.east);		
		\draw[color=black] (x3n.south) |- (d3.east);		
		\end{tikzpicture}
		\caption{The encoding of the formula $\phi_2$.}\label{fig:phi2}		
	\end{subfigure}
	\caption{Illustration of planar embeddings for the formulas $\phi_1$ and $\phi_2$ for $n=2$.}
\label{fig:planEmb}
\end{figure*}
In \EXP, a core pair consists of two \CNF formulas $\phi_1, \phi_2$, both defined using $2n$ variables, $n \in \Nbb^{+}$, such that $\phi_1$ is unsatisfiable and $\phi_2$ is satisfiable, and such that their graph encodings are 1-WL indistinguishable and planar. $\phi_1$ and $\phi_2$ are constructed using two structures which we refer to as \emph{variable chains} and \emph{variable bridges} respectively. 

A \emph{variable chain} $\phi_{chain}$ is defined over a set of $n \geq 2$ Boolean variables%
, and imposes that all variables be equally set. The variable chain can be defined in increasing or decreasing order over these variables. More specifically, given variables $x_i, ..., x_j$, 
\begin{align}
\text{Chain}_\text{Inc}(i, j) &= \bigwedge_{k=i}^{j-1} (\bar{x_k} \vee x_{i+(k+1)\%(j-i+1)}),~\text{and}\\
\text{Chain}_\text{Dec}(i, j) &= \bigwedge_{k=i}^{j-1} ({x_k} \vee \bar{x}_{i+(k+1)\%(j-i+1)}).
\end{align}
Additionally, a \emph{variable bridge} is defined over an even number of variables $x_0,..., x_{2n-1}$, as 
\begin{align}
{\phi_{bridge} = \bigwedge_{i=0}^{n-1} \big((x_i \vee x_{2n-1-i}) \land (\bar{x_i} \vee \bar{x}_{2n-1-i})\big)}.
\end{align}
A variable bridge makes the variables it connects forcibly have opposite values, e.g., $x_0 = \bar{x_1}$ for $n=1$. We denote a variable bridge over $x_0,..., x_{2n-1}$ as $\text{Bridge}(2n)$. 

To get $\phi_1$ and $\phi_2$, we define $\phi_1$ as a variable chain and bridge on all variables, yielding contrasting and unsatisfiable constraints. To define $\phi_2$, we ``cut'' the chain in half, such that the first $n$ variables can differ from the latter $n$, satisfying the bridge. The second half of the ``cut'' chain is then flipped to a decrementing order, which preserves the satisfiability of $\phi_2$, but maintains the planarity of the resulting graph. More specifically, this yields:
\begin{align}
\phi_1 &= \text{Chain}_\text{Inc}(0, 2n) \land \text{Bridge}(2n)\text{, and} \\
\phi_2 &= \text{Chain}_\text{Inc}(0, n) \land \text{Chain}_\text{Dec}(n, 2n) \land \text{Bridge}(2n).
\end{align}
\paragraph{Planar component.}
Following the generation of $\phi_1$ and $\phi_2$, a disjoint satisfiable planar graph component $\phi_\text{planar}$ is added. $\phi_\text{planar}$ shares no variables or disjunctions with the cores, so is primarily introduced to create noise and make learning more challenging. $\phi_\text{planar}$ is generated starting from random 2-connected (i.e., at least 2 edges must be removed to disconnect a component within the graph) bipartite planar graphs from the Plantri tool \cite{brinkmann2007fast}, such that (i)
the larger set of nodes in the graph is the variable set\footnote{Ties are broken arbitrarily if the two sets are equally sized.}, (ii)
highly-connected disjunctions are split in a planarity-preserving
fashion to maintain disjunction widths not exceeding 5, (iii) literal signs for variables are uniformly randomly assigned, and (iv) redundant disjunctions,
if any, are removed. If this $\phi_\text{planar}$ is satisfiable, then it is accepted and used. Otherwise, the formula is discarded and a new $\phi_\text{planar}$ is analogously generated until a satisfiable formula is produced.

Since the core pair and $\phi_{\text{planar}}$ are disjoint, it clearly follows that the graph encodings of $\phi_\text{planar} \land \phi_1$ and $\phi_\text{planar} \land \phi_2$ are planar and 1-WL indistinguishable. Furthermore, $\phi_\text{planar} \land \phi_1$ is satisfiable, and $\phi_\text{planar} \land \phi_2$ is not. Hence, the introduction of $\phi_\text{planar}$ maintains all the desirable core properties, all while making any generated \EXP dataset more challenging. 

The structural properties of the cores, combined with the combinatorial difficulty of \SAT, make \EXP a challenging dataset. For example, even minor formula changes, such as flipping a literal, can lead to a change in the \SAT outcome, which enables the creation of near-identical, yet semantically different instances. Moreover, \SAT is NP-complete \cite{cook1971complexity}, and remains so on planar instances \cite{hunt1998complexity}. Hence, \EXP is cast to be challenging, both from an expressiveness and computational perspective. 

\paragraph{Remark 3.} Intuitively, $\phi_1$ and $\phi_2$, generated as described, can be distinguished by 2-WL, as 2-WL can detect the break in cycles resulting from the ``cut''. In other words, 2-WL can identify that the chain has been broken in between these two formulas, and thus will return distinct colorings. Hence, $\phi_1$ and $\phi_2$ can be distinguished by 3-GCNs.
\paragraph{Graph encoding.}
We use the following graph encoding, denoted by $Enc$: (i) Every variable is encoded by two nodes, representing its positive and negative literals, and connected by an edge, (ii) Every disjunction is represented by a node, and an edge connects a literal node to a disjunction node if the literal appears in the disjunction, and (iii) Variable and disjunction nodes are encoded with their respective types. We opt for this encoding, as it is commonly used in the literature \cite{Selsam-ICLR2019}, and, for the sake of our empirical evaluation, yields planar encodings for \EXP graph pairs. 

\paragraph{Planar embeddings for core pair.}
We show planar embeddings for $Enc(\phi_1)$ and $Enc(\phi_2)$ for $n=2$ in \Cref{fig:planEmb}, and these embeddings can naturally be extended to any $n$. %
 $Enc(\phi_1)$ and $Enc(\phi_2)$ can also be shown to be 1-WL indistinguishable. This can be observed intuitively, as node neighborhoods in both graphs are identical and very regular: all variable nodes are connected to exactly one other variable node and two disjunction nodes, and all disjunction nodes are connected to exactly two variables. %

\subsubsection{Construction of \CEXP}
\label{app:CExpConstruction}
Given an \EXP dataset with $N$ pairs of graphs, we create \CEXP by selecting $N / 2$ graph pairs and modifying them to yield \Corrupt. The unmodified graph pairs are therefore exactly identical in type to \EXP instances, and we refer to these instances within \CEXP as \EXPTwo.  Then, for every graph pair, we discard the satisfiable graph and construct a new graph from a copy of the unsatisfiable graph as follows:
\begin{enumerate}
    \item Randomly introduce new literals to the existing disjunctions of the copy of the unsatisfiable graph, such that no redundancies are created (e.g., adding $x$ to a disjunction when $x$ or $\bar{x}$ is already present), until 3 literals are added \emph{and} the formula becomes satisfiable. Literal addition is done by creating new edges in the graph between disjunction and literal nodes. To do this, disjunctions with less than 5 literals are uniformly randomly selected, and the literal to add is uniformly randomly sampled from the set of all non-redundant literals for the disjunction.
    \item Once a satisfiable formula is reached, iterate sequentially over all added edges, and eliminate any edge whose removal does not restore unsatisfiability. This ensures that a minimal number of new edges, relative to the original unsatisfiable graph, are added. 
\end{enumerate}

Observe that these modifications have several interesting effects on the dataset. First, they preserve the existing UNSAT core nodes and edges, while flipping the satisfiability of their overall formulas, which makes the learning task go beyond structure identification. Second, they introduce significant new variability to the dataset, in that the planar component and cores can share edges. Finally, they make the graph pairs 1-WL distinguishable, which gives standard GNNs a chance to perform well on \Corrupt. 

\subsubsection{Dataset generation for experiments}
To create the \EXP dataset, we randomly generate 600 core pairs, where $n$ is uniformly randomly set between 2 and 4 inclusive. Then, we generate the additional planar component using Plantri, such that 500 $\phi_\text{planar}$ formulas are generated from 12-node planar bipartite planar graphs, and the remaining 100 from planar bipartite graphs with 15 nodes. 

This generation process implies that every formula has a number of variables ranging between 10 (4 core variables when $n=2$ plus a minimum 6 variables from the larger bipartite set during $\phi_\text{planar}$ generation from 12-node graphs) and 22 variables (8 core variables for $n=4$ plus a maximally-sized variable subset of 14 nodes for $\phi_\text{planar}$ generation from 15-node graphs). 

Furthermore, the number of disjunctions also ranges from 10 (8 core disjunctions for $n=2$ plus the minimum 2 disjunctions for the case where $\phi_\text{planar}$, generated from 12-node graphs, has 10 variables and 2 disjunctions) to 30 disjunctions (16 core disjunctions for $n=4$ plus at most 14 disjunctions for the case where $\phi_\text{planar}$, generated from 15-node graphs, initially has 8 variables and 7 disjunctions, which can at most lead to 14 final disjunctions following step (ii)).

\subsection{Standard Deviation of GCN-50\%RNI on \EXP over training}

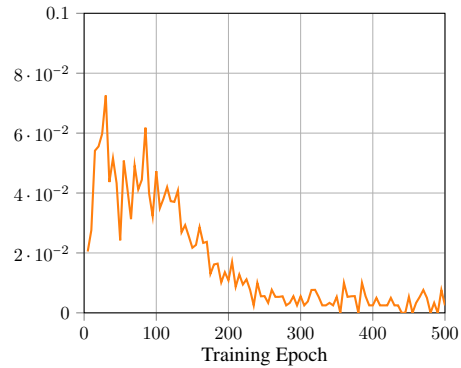
\begin{figure}
\centering
\begin{tikzpicture}[scale=0.7]

\definecolor{color0}{rgb}{0.12156862745098,0.466666666666667,0.705882352941177}
\definecolor{color4}{rgb}{1,0.498039215686275,0.0549019607843137}
\definecolor{color2}{rgb}{0.172549019607843,0.627450980392157,0.172549019607843}
\definecolor{color3}{rgb}{0.83921568627451,0.152941176470588,0.156862745098039}
\definecolor{color1}{rgb}{0.4,0.4,0.4}

\begin{axis}[
label style={font=\large},
tick label style={font=\normalsize},
tick align=outside,
tick pos=left,
x grid style={white!69.01960784313725!black},
xlabel={Training Epoch},
xmajorgrids,
xmin=0, xmax=500, 
y grid style={white!69.01960784313725!black},
ymajorgrids,
ymin=0, ymax=0.10
]

\addplot [line width=0.4mm, color4]
table [row sep=\\]{%
5 0.020480342879074177\\
10 0.027588242262078077\\
15 0.05413460589637239\\
20 0.05555277770832988\\
25 0.059773879468097674\\
30 0.07255744850346749\\
35 0.043652669512362643\\
40 0.05161287522400674\\
45 0.04330127018922194\\
50 0.024209731743889924\\
55 0.050826502273256365\\
60 0.04133198922545748\\
65 0.03128054063187242\\
70 0.04916666666666667\\
75 0.04119735698102758\\
80 0.04444097208657795\\
85 0.06182412330330469\\
90 0.03993049516903648\\
95 0.03227486121839516\\
100 0.047316898555261305\\
105 0.03497022543059546\\
110 0.038151743807531995\\
115 0.041965594373380564\\
120 0.037277115410575073\\
125 0.037043517951488115\\
130 0.040858835573770895\\
135 0.02688710967483614\\
140 0.02929732638541157\\
145 0.025603819159562044\\
150 0.021730674684008827\\
155 0.022684429314693673\\
160 0.028625940062196122\\
165 0.02334820954353649\\
170 0.02370243775554648\\
175 0.013017082793177762\\
180 0.01615893285805442\\
185 0.01643590243609669\\
190 0.010172129679778086\\
195 0.013565683830083088\\
200 0.010833333333333339\\
205 0.01699673171197596\\
210 0.00870025542409214\\
215 0.012801909579781013\\
220 0.009464847243000464\\
225 0.01118033988749895\\
230 0.007637626158259735\\
235 0.002499999999999991\\
240 0.010034662148993576\\
245 0.005527707983925673\\
250 0.0055901699437494795\\
255 0.0033333333333333214\\
260 0.0076829537144107425\\
265 0.005335936864527384\\
270 0.005335936864527384\\
275 0.005527707983925671\\
280 0.002499999999999991\\
285 0.0033333333333333214\\
290 0.005527707983925673\\
295 0.002499999999999991\\
300 0.005527707983925673\\
305 0.002499999999999991\\
310 0.0038188130791298527\\
315 0.007637626158259738\\
320 0.0076829537144107425\\
325 0.005335936864527385\\
330 0.002499999999999991\\
335 0.002499999999999991\\
340 0.0033333333333333214\\
345 0.002499999999999991\\
350 0.005335936864527385\\
355 0.0\\
360 0.010034662148993576\\
365 0.005335936864527385\\
370 0.005527707983925673\\
375 0.0055901699437494795\\
380 0.0\\
385 0.010034662148993576\\
390 0.005527707983925671\\
395 0.002499999999999991\\
400 0.002499999999999991\\
405 0.005000000000000016\\
410 0.002499999999999991\\
415 0.002499999999999991\\
420 0.002499999999999991\\
425 0.005000000000000016\\
430 0.002499999999999991\\
435 0.002499999999999991\\
440 0.0\\
445 0.0\\
450 0.005335936864527384\\
455 0.0\\
460 0.0033333333333333214\\
465 0.005335936864527384\\
470 0.007637626158259738\\
475 0.005000000000000016\\
480 0.0\\
485 0.0033333333333333214\\
490 0.0\\
495 0.007637626158259738\\
500 0.002499999999999991\\
};

\end{axis}
\end{tikzpicture}
\caption{Standard deviation of test accuracy over all 10 validation splits for GCN-50\%RNI on \EXP.}\label{fig:stdev}
\end{figure}
\begin{figure*}[t!]
	\centering
	\begin{subfigure}{.48\textwidth}
		\centering
		\begin{tikzpicture}[scale=0.67]

\definecolor{ThreeGCN}{rgb}{0,0.516,0.66}
\definecolor{GCNStandard}{rgb}{0.98,0.79,0.26}
\definecolor{GCNRNI12}{rgb}{0,0.16,0.39}
\definecolor{GCNRNI50}{rgb}{0.40,0.66,0.26}
\definecolor{GCNRNI87}{rgb}{0.84,0.22,0.05}
\definecolor{GCNRNIFull}{rgb}{0.63,0.39,0.7}

\begin{axis}[
legend cell align={left},
legend columns=3,
legend entries={{\ThreeGNNFull},{\GCNRNI}, {GCN-50\%}, {GCN-87.5\%}, {GCN-12.5\%}},
legend style={at={(-0.07, -0.21)}, font=\normalsize, line width=0.4mm, anchor=north west, draw=white!80.0!black},
label style={font=\large},
tick label style={font=\normalsize},
tick align=outside,
tick pos=left,
x grid style={white!69.01960784313725!black},
xlabel={Epoch},
xmajorgrids,
xmin=0, xmax=500, 
y grid style={white!69.01960784313725!black},
ymajorgrids,
ymin=50, ymax=100
]

\addplot [line width=0.4mm, ThreeGCN]
table [row sep=\\]{%
1 52\\
2 62.41\\
3 81.1\\
4 93.1\\
5 96.4\\
10 99.3\\
15 99.33\\
20 99.58\\
25 99.75\\
30 99.75\\
35 99.75\\
40 99.58\\
45 99.83\\
90 99.83\\
95 99.83\\
100 99.83\\
};

\addplot [line width=0.4mm, GCNRNIFull]
table [row sep=\\]{%
0 50\\
10 51.7\\
20 56.2\\
30 64.6\\
40 71.85\\
50 74.21\\
60 77.18\\
70 80.45\\
80 83.06\\
90 85.98\\ 
100 88.48\\
110 89.21\\
120 90.33\\
130 90.8\\
140 91.56\\
150 92.19\\
160 93.31\\
170 93.79\\
180 93.71\\
190 94.27\\
200 94.05\\
210 94.86\\
220 94.59\\
230 95\\
240 95.12\\
250 95.72\\
260 95.10\\
270 95.70\\
280 95.87\\
290 96.27\\
300 95.73\\
310 96.36\\
320 96.17\\
330 96.53\\
340 96.73\\
350 96.59\\
360 96.40\\
370 97.01\\
380 97.11\\
390 97.04\\
400 97.28\\
410 97.36\\
420 97.13\\
430 97.66\\
440 97.56\\
450 97.37\\
460 97.54\\
470 97.48\\
480 97.55\\
490 97.46\\
500 98.01\\
};

\addplot [line width=0.4mm, GCNRNI50]
table [row sep=\\]{%
0 50\\
10 52.17\\
20 57.33\\
30 61.58\\
40 69.75\\
50 79.67\\
60 83.08\\
70 85.58\\
80 89.00\\
90 89.25\\ 
100 90.92\\
110 91.58\\
120 92.75\\
130 92.33\\
140 94.33\\
150 93.83\\
160 95.17\\
170 95.42\\
180 96.42\\
190 96.50\\
200 95.67\\
210 97.75\\
220 98.42\\
230 97.83\\
240 98.00\\
250 98.42\\
260 98.33\\
270 98.58\\
280 99.00\\
290 99.42\\
300 99.33\\
310 99.67\\
320 99.58\\
330 99.50\\
340 99.58\\
350 99.67\\
360 99.58\\
370 99.75\\
380 99.58\\
390 99.67\\
400 99.17\\
410 99.83\\
420 99.83\\
430 99.42\\
440 99.50\\
450 99.83\\
460 99.92\\
470 99.75\\
480 99.83\\
490 99.75\\
500 99.92\\
};

\addplot [line width=0.4mm, GCNRNI87] %
table [row sep=\\]{%
0 50\\
10 53.67\\
20 52.41\\
30 61.33\\
40 68.83\\
50 69.25\\
60 76.25\\
70 79.83\\
80 84.83\\
90 85.92\\ 
100 87.33\\
110 91.58\\
120 91.83\\
130 93.00\\
140 93.92\\
150 95.42\\
160 96.25\\
170 94.75\\
180 96.33\\
190 97.67\\
200 97.17\\
210 96.75\\
220 97.50\\
230 97.83\\
240 97.50\\
250 98.17\\
260 97.42\\
270 98.08\\
280 98.33\\
290 97.83\\
300 99.00\\
310 99.08\\
320 98.67\\
330 98.75\\
340 98.58\\
350 98.92\\
360 99.58\\
370 99.25 \\
380 99.33 \\
390 99.00 \\
400 99.58 \\
410 99.75 \\
420 99.67 \\
430 99.67 \\
440 99.58 \\
450 99.75 \\
460 99.33 \\
470 99.67 \\
480 99.58 \\
490 99.75 \\
500 99.83 \\
};

\addplot [line width=0.4mm, GCNRNI12] %
table [row sep=\\]{%
0 50\\
10 51.50\\
20 54.67\\
30 65.83\\
40 76.83\\
50 80.75\\
60 82.25\\
70 85.00\\
80 85.75\\
90 88.58\\ 
100 92.33\\
110 94.17\\
120 95.17\\
130 97.33\\
140 98.33\\
150 98.75\\
160 99.75\\
170 99.67\\
180 99.58\\
190 99.58\\
200 99.67\\
210 99.83\\
220 99.75\\
230 99.83\\
240 99.92\\
250 99.75\\
260 99.92\\
270 99.92\\
280 100.0\\
290 99.92\\
300 100.0\\
310 99.83\\
320 99.92\\
330 100.0\\
340 99.75\\
350 99.83\\
360 99.83\\
370 99.92\\
380 100.0 \\
390 99.50 \\
400 99.75 \\
410 99.92 \\
420 99.75 \\
430 99.50 \\
440 99.92 \\
450 100.0 \\
460 99.93 \\
470 99.75 \\
480 100.0 \\
490 99.92 \\
500 99.92 \\
};
\end{axis}
\end{tikzpicture}
		\caption{Learning curves on \EXP.}
		\label{app:fig:modelConvergence}
	\end{subfigure}%
	\begin{subfigure}{.48\textwidth}
		\centering
		\begin{tikzpicture}[scale=0.67]

\definecolor{ThreeGCN}{rgb}{0,0.516,0.66}
\definecolor{GCNStandard}{rgb}{0.98,0.79,0.26}
\definecolor{GCNRNI12}{rgb}{0,0.16,0.39}
\definecolor{GCNRNI50}{rgb}{0.40,0.66,0.26}
\definecolor{GCNRNI87}{rgb}{0.84,0.22,0.05}
\definecolor{GCNRNIFull}{rgb}{0.63,0.39,0.7}

\begin{axis}[
legend cell align={left},
legend columns=3,
legend entries={{\ThreeGNNFull},{\GCNRNI}, {GCN-50\%}, {GCN-87.5\%}, {GCN-12.5\%}},
legend style={at={(-0.07, -0.21)}, font=\normalsize, line width=0.4mm, anchor=north west, draw=white!80.0!black},
label style={font=\large},
tick label style={font=\normalsize},
tick align=outside,
tick pos=left,
x grid style={white!69.01960784313725!black},
xlabel={Epoch},
xmajorgrids,
xmin=0, xmax=1000, 
y grid style={white!69.01960784313725!black},
ymajorgrids,
ymin=50, ymax=100
]

\addplot [line width=0.4mm, ThreeGCN]
table [row sep=\\]{%
10 50.33333333333333\\
20 68.66666666666667\\
30 87.66666666666667\\
40 87.66666666666667\\
50 97.66666666666669\\
60 96.66666666666666\\
70 99.0\\
80 99.0\\
90 99.0\\
100 99.33333333333334\\
110 99.33333333333334\\
120 99.33333333333334\\
130 99.0\\
140 99.33333333333334\\
150 99.0\\
160 98.66666666666667\\
170 99.33333333333334\\
180 99.33333333333334\\
190 97.33333333333333\\
200 99.33333333333334\\
};

\addplot [line width=0.4mm, GCNRNIFull]
table [row sep=\\]{%
10 50.0\\
20 50.33333333333333\\
30 50.66666666666666\\
40 51.66666666666667\\
50 49.66666666666667\\
60 52.666666666666664\\
70 46.666666666666664\\
80 56.33333333333332\\
90 56.666666666666664\\
100 59.00000000000001\\
110 59.0\\
120 66.66666666666667\\
130 63.66666666666667\\
140 64.33333333333331\\
150 64.66666666666666\\
160 64.33333333333333\\
170 70.33333333333334\\
180 67.66666666666666\\
190 68.0\\
200 70.33333333333334\\
210 69.66666666666666\\
220 68.0\\
230 71.66666666666667\\
240 71.0\\
250 76.66666666666666\\
260 74.99999999999999\\
270 76.0\\
280 76.0\\
290 78.0\\
300 78.33333333333333\\
310 79.33333333333333\\
320 80.66666666666666\\
330 78.66666666666667\\
340 83.0\\
350 82.33333333333333\\
360 82.0\\
370 85.33333333333333\\
380 83.33333333333333\\
390 81.66666666666667\\
400 83.0\\
410 88.33333333333334\\
420 84.66666666666667\\
430 87.00000000000001\\
440 89.00000000000003\\
450 85.66666666666669\\
460 90.0\\
470 90.00000000000001\\
480 88.00000000000001\\
490 88.66666666666667\\
500 90.66666666666669\\
510 86.99999999999999\\
520 90.0\\
530 91.33333333333333\\
540 92.33333333333334\\
550 90.33333333333333\\
560 89.33333333333333\\
570 92.66666666666667\\
580 95.0\\
590 91.33333333333334\\
600 91.00000000000001\\
610 92.33333333333334\\
620 92.33333333333334\\
630 92.66666666666667\\
640 94.66666666666667\\
650 93.0\\
660 92.66666666666667\\
670 93.0\\
680 94.0\\
690 91.66666666666667\\
700 92.33333333333334\\
710 92.00000000000001\\
720 95.0\\
730 91.66666666666667\\
740 94.0\\
750 93.33333333333333\\
760 96.33333333333333\\
770 93.0\\
780 95.33333333333334\\
790 92.66666666666667\\
800 92.33333333333334\\
810 92.66666666666667\\
820 93.0\\
830 96.0\\
840 94.66666666666667\\
850 96.66666666666669\\
860 95.0\\
870 96.0\\
880 97.0\\
890 95.33333333333334\\
900 95.0\\
910 94.33333333333334\\
920 97.0\\
930 94.0\\
940 95.66666666666667\\
950 97.66666666666669\\
960 95.66666666666667\\
970 94.66666666666667\\
980 95.0\\
990 95.66666666666667\\
1000 95.66666666666667\\
};

\addplot [line width=0.4mm, GCNRNI50]
table [row sep=\\]{%
10 52.666666666666664\\
20 49.66666666666666\\
30 51.0\\
40 50.33333333333333\\
50 54.666666666666664\\
60 52.666666666666664\\
70 49.333333333333336\\
80 52.666666666666664\\
90 52.666666666666664\\
100 55.666666666666664\\
110 55.99999999999999\\
120 56.666666666666664\\
130 52.0\\
140 58.66666666666666\\
150 61.66666666666667\\
160 65.99999999999999\\
170 61.0\\
180 67.00000000000001\\
190 65.99999999999999\\
200 68.0\\
210 68.66666666666667\\
220 68.33333333333333\\
230 68.66666666666667\\
240 69.0\\
250 79.66666666666666\\
260 73.0\\
270 78.99999999999999\\
280 74.0\\
290 75.33333333333333\\
300 80.0\\
310 76.66666666666669\\
320 80.66666666666666\\
330 86.66666666666669\\
340 84.66666666666669\\
350 84.66666666666667\\
360 88.33333333333334\\
370 92.33333333333334\\
380 90.66666666666666\\
390 90.00000000000001\\
400 92.00000000000001\\
410 91.33333333333333\\
420 93.0\\
430 95.66666666666667\\
440 92.33333333333333\\
450 94.0\\
460 94.33333333333334\\
470 96.66666666666666\\
480 96.66666666666666\\
490 97.66666666666669\\
500 98.33333333333334\\
510 97.00000000000001\\
520 96.66666666666666\\
530 99.66666666666667\\
540 98.33333333333334\\
550 98.33333333333334\\
560 98.33333333333334\\
570 98.33333333333334\\
580 99.0\\
590 99.33333333333334\\
600 99.66666666666667\\
610 98.66666666666667\\
620 98.66666666666667\\
630 99.0\\
640 99.0\\
650 99.0\\
660 99.33333333333334\\
670 99.0\\
680 100.0\\
690 99.33333333333334\\
700 99.66666666666667\\
710 100.0\\
720 99.66666666666667\\
730 100.0\\
740 99.66666666666667\\
750 100.0\\
760 100.0\\
770 100.0\\
780 100.0\\
790 99.66666666666667\\
800 99.66666666666667\\
810 100.0\\
820 100.0\\
830 100.0\\
840 100.0\\
850 99.66666666666667\\
860 100.0\\
870 100.0\\
880 100.0\\
890 100.0\\
900 99.66666666666667\\
910 99.66666666666667\\
920 100.0\\
930 100.0\\
940 100.0\\
950 100.0\\
960 100.0\\
970 100.0\\
980 100.0\\
990 100.0\\
1000 99.33333333333334\\
};

\addplot [line width=0.4mm, GCNRNI87] %
table [row sep=\\]{%
10 51.0\\
20 52.666666666666664\\
30 48.66666666666667\\
40 50.66666666666666\\
50 53.99999999999999\\
60 51.66666666666666\\
70 50.66666666666666\\
80 55.333333333333336\\
90 55.00000000000001\\
100 49.66666666666666\\
110 55.00000000000001\\
120 57.33333333333333\\
130 55.66666666666665\\
140 56.666666666666664\\
150 60.0\\
160 64.99999999999999\\
170 68.0\\
180 75.66666666666667\\
190 72.66666666666666\\
200 81.0\\
210 77.66666666666666\\
220 86.66666666666666\\
230 90.0\\
240 84.66666666666667\\
250 89.66666666666667\\
260 91.33333333333333\\
270 91.33333333333334\\
280 93.0\\
290 95.0\\
300 94.0\\
310 97.66666666666669\\
320 98.00000000000001\\
330 98.66666666666667\\
340 98.00000000000001\\
350 99.33333333333334\\
360 99.33333333333334\\
370 99.66666666666667\\
380 99.66666666666667\\
390 99.66666666666667\\
400 99.66666666666667\\
410 99.66666666666667\\
420 99.66666666666667\\
430 99.66666666666667\\
440 100.0\\
450 100.0\\
460 99.66666666666667\\
470 99.0\\
480 100.0\\
490 99.66666666666667\\
500 100.0\\
510 99.33333333333334\\
520 99.66666666666667\\
530 100.0\\
540 100.0\\
550 100.0\\
560 99.66666666666667\\
570 99.66666666666667\\
580 100.0\\
590 100.0\\
600 100.0\\
610 100.0\\
620 99.0\\
630 100.0\\
640 99.66666666666667\\
650 99.66666666666667\\
660 99.33333333333334\\
670 100.0\\
680 100.0\\
690 100.0\\
700 100.0\\
710 99.66666666666667\\
720 100.0\\
730 98.00000000000001\\
740 99.33333333333334\\
750 100.0\\
760 100.0\\
770 99.66666666666667\\
780 100.0\\
790 99.66666666666667\\
800 99.66666666666667\\
810 100.0\\
820 100.0\\
830 99.66666666666667\\
840 99.33333333333334\\
850 100.0\\
860 100.0\\
870 100.0\\
880 99.66666666666667\\
890 100.0\\
900 99.66666666666667\\
910 100.0\\
920 98.66666666666667\\
930 100.0\\
940 100.0\\
950 100.0\\
960 99.33333333333334\\
970 99.66666666666667\\
980 100.0\\
990 100.0\\
1000 100.0\\
};

\addplot [line width=0.4mm, GCNRNI12] %
table [row sep=\\]{%
10 50.0\\
20 51.0\\
30 50.0\\
40 54.666666666666664\\
50 52.666666666666664\\
60 55.00000000000001\\
70 54.666666666666664\\
80 50.0\\
90 51.33333333333333\\
100 55.666666666666664\\
110 53.33333333333334\\
120 57.666666666666664\\
130 56.666666666666664\\
140 55.666666666666664\\
150 57.99999999999999\\
160 60.66666666666667\\
170 60.33333333333333\\
180 69.0\\
190 68.0\\
200 70.0\\
210 74.33333333333334\\
220 70.66666666666667\\
230 72.33333333333333\\
240 78.0\\
250 78.0\\
260 78.0\\
270 76.33333333333334\\
280 84.33333333333334\\
290 83.33333333333334\\
300 83.33333333333334\\
310 87.33333333333334\\
320 89.33333333333333\\
330 92.33333333333333\\
340 94.33333333333334\\
350 95.33333333333334\\
360 97.0\\
370 97.33333333333334\\
380 98.66666666666667\\
390 99.0\\
400 98.33333333333334\\
410 99.0\\
420 98.33333333333334\\
430 99.33333333333334\\
440 99.33333333333334\\
450 99.0\\
460 99.33333333333334\\
470 99.66666666666667\\
480 99.33333333333334\\
490 100.0\\
500 99.33333333333334\\
510 98.33333333333334\\
520 100.0\\
530 99.66666666666667\\
540 100.0\\
550 100.0\\
560 100.0\\
570 99.66666666666667\\
580 99.0\\
590 99.33333333333334\\
600 99.66666666666667\\
610 99.0\\
620 99.66666666666667\\
630 99.66666666666667\\
640 99.33333333333334\\
650 99.66666666666667\\
660 99.66666666666667\\
670 99.66666666666667\\
680 100.0\\
690 99.33333333333334\\
700 99.33333333333334\\
710 100.0\\
720 98.66666666666667\\
730 100.0\\
740 100.0\\
750 99.66666666666667\\
760 100.0\\
770 99.66666666666667\\
780 100.0\\
790 100.0\\
800 100.0\\
810 99.66666666666667\\
820 100.0\\
830 100.0\\
840 99.66666666666667\\
850 99.33333333333334\\
860 100.0\\
870 99.66666666666667\\
880 99.66666666666667\\
890 99.33333333333334\\
900 99.66666666666667\\
910 100.0\\
920 99.66666666666667\\
930 100.0\\
940 100.0\\
950 100.0\\
960 99.66666666666667\\
970 100.0\\
980 100.0\\
990 100.0\\
1000 100.0\\
};
\end{axis}
\end{tikzpicture}
		\caption{Learning curves on \SparseEXP.}
		\label{app:fig:modelConvergenceSparse}
	\end{subfigure}%
	\caption{Model convergence results for Experiment 1 on the datasets \EXP and \SparseEXP.}
	\label{app:fig:resEXP}
\end{figure*}
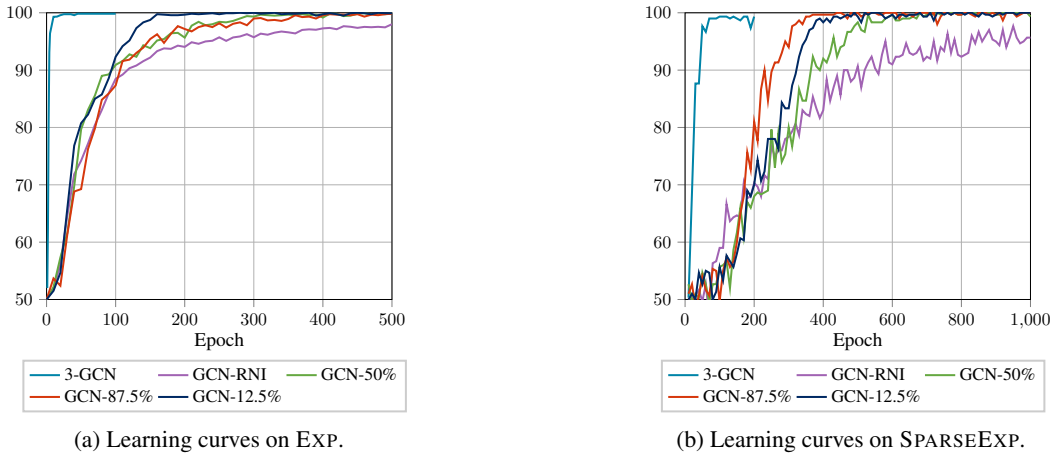
 In this subsection, we investigate the variability of \GCNRNI~learning across validation folds, and do so with a representative model and dataset, namely the semi-randomized GCN-50\%RNI model and the standard \EXP dataset. The standard deviation of the test accuracy of GCN-50\%RNI over \EXP, across all 10 cross-validation folds relative to the number of epochs, is shown in \Cref{fig:stdev}. From this figure, we see that standard deviation spikes sharply at the start of training, and only begins dropping after 100 epochs. This suggests that the learning behavior of GCN-50\% RNI is quite variable, sometimes requiring few epochs to converge, and in other cases requiring a very high number of epochs. Furthermore, standard deviation converges to almost zero following 200 epochs, corresponding to the phase where all validation folds have achieved near-perfect test performance. From these findings, we further confirm that RNI introduces volatility to GCN training, this time manifesting in variable convergence times across validation folds, but that this volatility does not ultimately hinder convergence and performance, as all folds eventually reach satisfactory performance within a reasonable amount of epochs, and subsequently stabilize. 

\subsection{Additional Experiments}
In addition to the experiments in the main body of the paper, we additionally evaluate RNI on sparser analog datasets to \EXP and \CEXP, namely \SparseEXP and \SparseCEXP. These datasets only contain 25\% of the number of instances of their original counterparts, and are used to study the behavior and impact of RNI when data is sparse.

\subsubsection{Experiment 1: \SparseEXP}
In this experiment, we generate \SparseEXP analogously to \EXP, except that this dataset only consists of 150 graph pairs, i.e., 300 graphs in total. We then train \ThreeGNNFull for 200 epochs, and all other systems for 1000 epochs on \SparseEXP, as opposed to 100 and 500 respectively for \EXP, to give all evaluated models a better opportunity to compensate for the smaller dataset size. We show the learning curves for all models on \SparseEXP, and reproduce the original figure for \EXP, in \Cref{app:fig:resEXP} for easier comparison.

First, we observe that all models converge slower on \SparseEXP compared to \EXP. This is not surprising, as a lower data availability makes learning a well-performing function slower and more challenging. More specifically, sparsity implies that (i) fewer weight updates are made per epoch, and (ii) these updates are of lower quality, as they are computed from a less complete dataset. Nonetheless, the same convergence patterns for \GCNRNI~models and \ThreeGNNFull are also visible in this setting, further highighting the increased convergence time required by \GCNRNI~models. 

We also observe that all \GCNRNI~models, though also eventually converging, do so in a more volatile fashion. Indeed, \GCNRNI~models suffer from the sparseness of the dataset, as this makes them more sensitive to RNI. As a result, these models require more training to effectively learn robustness against RNI values, and learn this from a smaller sample set, increasing their variability further. Moreover, the nature of \SparseEXP makes learning more difficult, as it fully relies on RNI for MPNNs to have a chance of achieving above-random performance, and thus encourages MPNNs to fit specific RNI values. Hence, RNI introduces significant volatility and variability to training, particularly with sparser data, and requires substantial training and epochs for \GCNRNI~models to effectively develop a robustness to RNI instantiations. 

\subsubsection{Experiment 2: \SparseCEXP}
\begin{figure*}[t!]
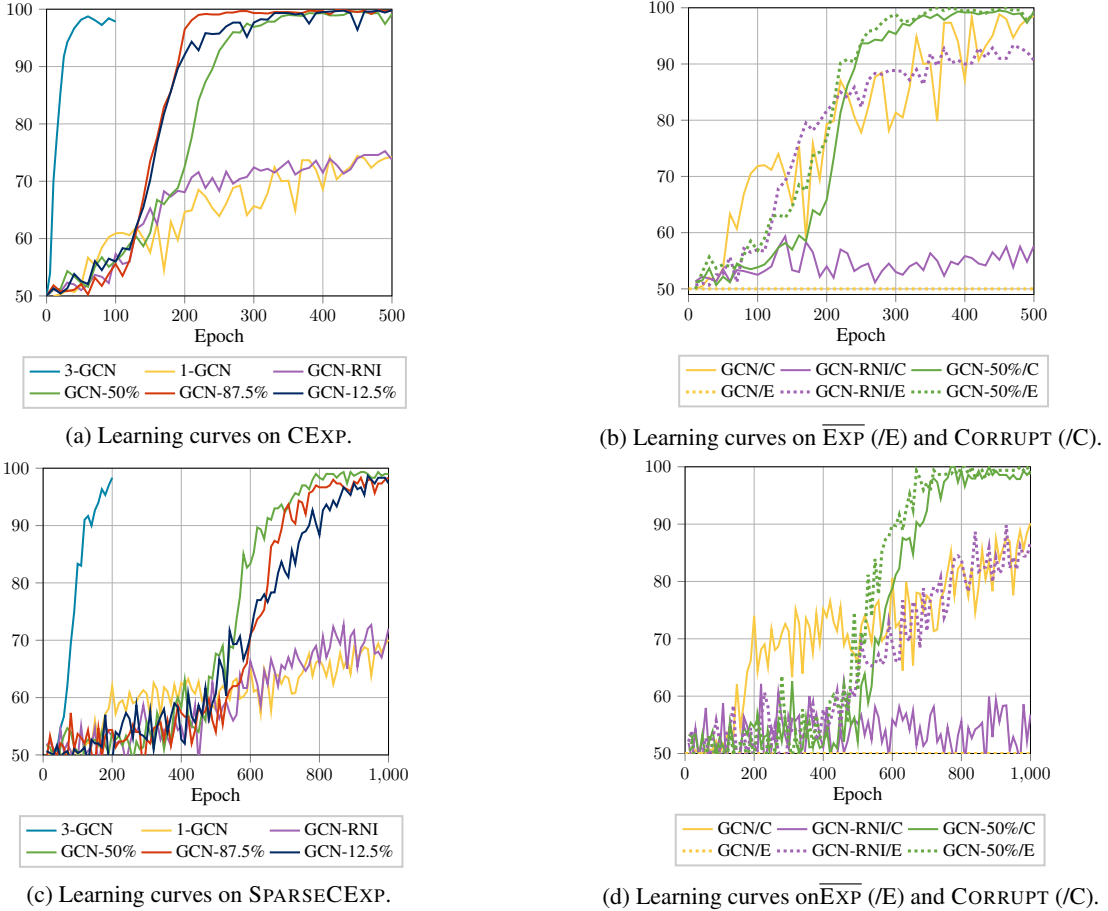

	\centering
	\begin{subfigure}{.48\textwidth}
		\centering
		\begin{tikzpicture}[scale=0.67]

\definecolor{ThreeGCN}{rgb}{0,0.516,0.66}
\definecolor{GCNStandard}{rgb}{0.98,0.79,0.26}
\definecolor{GCNRNI12}{rgb}{0,0.16,0.39}
\definecolor{GCNRNI50}{rgb}{0.40,0.66,0.26}
\definecolor{GCNRNI87}{rgb}{0.84,0.22,0.05}
\definecolor{GCNRNIFull}{rgb}{0.63,0.39,0.7}

\begin{axis}[
legend cell align={left},
legend columns=3,
legend entries={{\ThreeGNNFull},{\OneGCN},{\GCNRNI}, {GCN-50\%}, {GCN-87.5\%}, {GCN-12.5\%}},
legend style={at={(-0.07, -0.21)}, font=\normalsize, line width=0.4mm, anchor=north west, draw=white!80.0!black},
label style={font=\large},
tick label style={font=\normalsize},
tick align=outside,
tick pos=left,
x grid style={white!69.01960784313725!black},
xlabel={Epoch},
xmajorgrids,
xmin=0, xmax=500, 
y grid style={white!69.01960784313725!black},
ymajorgrids,
ymin=50, ymax=100
]

\addplot [line width=0.4mm, ThreeGCN]
table [row sep=\\]{%
0 50\\
5 53.91 \\
10 70.08\\
15 78.00 \\
20 85.33 \\
25 91.74 \\
30 94.25 \\
40 96.67 \\
50 98.17 \\
60 98.75 \\
70 98.08 \\
80 97.25 \\
90 98.42 \\ 
100 97.83\\
};

\addplot [line width=0.4mm,GCNStandard]
table [row sep=\\]{
10 50.0\\
20 50.25000000000001\\
30 51.0\\
40 50.66666666666666\\
50 51.83333333333333\\
60 56.66666666666668\\
70 55.333333333333336\\
80 58.41666666666667\\
90 60.25\\
100 60.916666666666664\\
110 61.0\\
120 60.583333333333336\\
130 61.999999999999986\\
140 60.083333333333336\\
150 57.58333333333333\\
160 62.66666666666667\\
170 54.49999999999999\\
180 62.916666666666664\\
190 59.75\\
200 64.66666666666667\\
210 64.91666666666667\\
220 68.5\\
230 67.33333333333333\\
240 65.33333333333333\\
250 63.91666666666668\\
260 66.08333333333334\\
270 68.83333333333333\\
280 69.25000000000001\\
290 64.08333333333334\\
300 65.66666666666667\\
310 65.25000000000001\\
320 67.91666666666666\\
330 72.41666666666666\\
340 70.0\\
350 70.08333333333334\\
360 64.91666666666667\\
370 73.66666666666666\\
380 73.66666666666667\\
390 72.0\\
400 68.58333333333333\\
410 74.16666666666666\\
420 70.33333333333334\\
430 71.58333333333333\\
440 72.5\\
450 74.41666666666669\\
460 74.0\\
470 72.33333333333334\\
480 73.41666666666666\\
490 74.0\\
500 74.25\\};

\addplot [line width=0.4mm, GCNRNIFull] %
table [row sep=\\]{%
0 50\\
10 51.08\\
20 50.67\\
30 52.33\\
40 52\\
50 51\\
60 52\\
70 53.67\\
80 53.33\\
90 52.25\\ 
100 57.33\\
110 55.58\\
120 55.92\\
130 61.67\\
140 62.58\\
150 65.25\\
160 62.5\\
170 68.25\\
180 67.33\\
190 68.33\\
200 68.08\\
210 70.67\\
220 71.58\\
230 68.83\\
240 70.58\\
250 68.33\\
260 71.67\\
270 69.58\\
280 70.42\\
290 70.75\\
300 72.42\\
310 71.83\\
320 72.17\\
330 71.50\\
340 72.50\\
350 73.50\\
360 71.17\\
370 72.00 \\
380 72.33 \\
390 73.58 \\
400 71.50 \\
410 73.92\\
420 72.83\\
430 71.33\\
440 72.08\\
450 74.00\\
460 74.58\\
470 74.58\\
480 74.58\\
490 75.25\\
500 73.75\\
};

\addplot [line width=0.4mm, GCNRNI50] %
table [row sep=\\]{%
0 50\\
10 51.17\\
20 51.17\\
30 54.33\\
40 53.42\\
50 52.25\\
60 51.58\\
70 55.08\\
80 56.75\\
90 55.08\\ 
100 56.08\\
110 57.25\\
120 59.00\\
130 60.33\\
140 58.67\\
150 61.00\\
160 66.75\\
170 66.00\\
180 67.67\\
190 68.83\\
200 72.58\\
210 77.58\\
220 84.08\\
230 87.33\\
240 89.58\\
250 92.75\\
260 94.41\\
270 96.00\\
280 95.92\\
290 97.50\\
300 96.92\\
310 97.17\\
320 97.83\\
330 97.83\\
340 98.75\\
350 99.00\\
360 98.92\\
370 98.83 \\
380 99.33 \\
390 99.25 \\
400 99.33 \\
410 98.92 \\
420 98.92 \\
430 99.00 \\
440 99.58 \\
450 99.58 \\
460 99.50 \\
470 99.25 \\
480 99.33 \\
490 97.42 \\
500 99.17 \\
};

\addplot [line width=0.4mm, GCNRNI87] %
table [row sep=\\]{%
0 50\\
10 51.83\\
20 50.83\\
30 50.83\\
40 51.08\\
50 52.08\\
60 50.25\\
70 53.17\\
80 51.75\\
90 53.83\\ 
100 55.58\\
110 53.50\\
120 56.17\\
130 61.42\\
140 67.00\\
150 73.50\\
160 77.58\\
170 83.00\\
180 85.67\\
190 90.42\\
200 96.50\\
210 98.08\\
220 99.00\\
230 99.17\\
240 99.08\\
250 99.08\\
260 99.42\\
270 99.42\\
280 99.67\\
290 99.67\\
300 99.33\\
310 99.33\\
320 99.25\\
330 99.50\\
340 99.50\\
350 99.50\\
360 99.42\\
370 99.33 \\
380 99.75 \\
390 99.67 \\
400 99.67 \\
410 99.08 \\
420 99.92 \\
430 99.83 \\
440 99.67 \\
450 99.50 \\
460 99.75 \\
470 99.17 \\
480 99.83 \\
490 99.67 \\
500 99.75 \\
};

\addplot [line width=0.4mm, GCNRNI12] %
table [row sep=\\]{%
0 50\\
10 51.33\\
20 50.42\\
30 51.33\\
40 53.83\\
50 52.58\\
60 52.08\\
70 56.08\\
80 54.50\\
90 56.50\\ 
100 56.00\\
110 58.33\\
120 58.08\\
130 62.17\\
140 65.41\\
150 70.25\\
160 76.67\\
170 81.75\\
180 85.73\\
190 89.67\\
200 92.08\\
210 94.33\\
220 92.83\\
230 95.83\\
240 95.67\\
250 95.75\\
260 97.00\\
270 97.67\\
280 97.67\\
290 95.17\\
300 97.67\\
310 98.25\\
320 98.17\\
330 99.33\\
340 99.33\\
350 99.33\\
360 99.08\\
370 99.25 \\
380 97.50 \\
390 99.33 \\
400 99.50 \\
410 99.58 \\
420 99.67 \\
430 99.75 \\
440 99.33 \\
450 96.42 \\
460 99.58 \\
470 99.58 \\
480 99.75 \\
490 99.42 \\
500 99.83 \\
};
\end{axis}
\end{tikzpicture}
		\caption{Learning curves on \CEXP.}
		\label{app:fig:modelConvergencePlus}
	\end{subfigure}%
	\begin{subfigure}{.48\textwidth}
		\centering
		\begin{tikzpicture}[scale=0.67]

\definecolor{ThreeGCN}{rgb}{0,0.516,0.66}
\definecolor{GCNStandard}{rgb}{0.98,0.79,0.26}
\definecolor{GCNRNI12}{rgb}{0,0.16,0.39}
\definecolor{GCNRNI50}{rgb}{0.40,0.66,0.26}
\definecolor{GCNRNI87}{rgb}{0.84,0.22,0.05}
\definecolor{GCNRNIFull}{rgb}{0.63,0.39,0.7}

\begin{axis}[
legend cell align={left},
legend columns=3,
legend entries={{GCN/C},{\GCNRNI/C}, {GCN-50\%/C},{GCN/E},{\GCNRNI/E}, {GCN-50\%/E}},
legend style={at={(-0.03, -0.21)}, font=\normalsize, line width=0.4mm, anchor=north west, draw=white!80.0!black},
label style={font=\large},
tick label style={font=\normalsize},
tick align=outside,
tick pos=left,
x grid style={white!69.01960784313725!black},
xlabel={Epoch},
xmajorgrids,
xmin=0, xmax=500, 
y grid style={white!69.01960784313725!black},
ymajorgrids,
ymin=49.0, ymax=100
]

\addplot [line width=0.4mm, GCNStandard, mark size=1pt]
table [row sep=\\]{%
10 50.0\\
20 50.5\\
30 52.0\\
40 51.33333333333334\\
50 53.666666666666664\\
60 63.33333333333332\\
70 60.66666666666667\\
80 66.83333333333333\\
90 70.5\\
100 71.83333333333333\\
110 72.0\\
120 71.16666666666667\\
130 74.00000000000001\\
140 70.16666666666667\\
150 65.16666666666666\\
160 75.33333333333333\\
170 59.0\\
180 75.83333333333333\\
190 69.5\\
200 79.33333333333333\\
210 79.83333333333333\\
220 87.00000000000001\\
230 84.66666666666667\\
240 80.66666666666666\\
250 77.83333333333333\\
260 82.16666666666667\\
270 87.66666666666667\\
280 88.5\\
290 78.16666666666666\\
300 81.33333333333333\\
310 80.5\\
320 85.83333333333331\\
330 94.83333333333333\\
340 90.0\\
350 90.16666666666667\\
360 79.83333333333334\\
370 97.33333333333333\\
380 97.33333333333334\\
390 94.0\\
400 87.16666666666667\\
410 98.33333333333331\\
420 90.66666666666666\\
430 93.16666666666666\\
440 95.0\\
450 98.83333333333333\\
460 98.00000000000001\\
470 94.66666666666667\\
480 96.83333333333334\\
490 98.00000000000001\\
500 98.5\\
};

\addplot [line width=0.4mm, GCNRNIFull,mark size=1pt]
table [row sep=\\]{%
10 51.33333333333333\\
20 52.16666666666667\\
30 51.83333333333333\\
40 51.33333333333333\\
50 53.49999999999999\\
60 51.33333333333334\\
70 53.333333333333336\\
80 53.16666666666667\\
90 52.83333333333334\\
100 52.5\\
110 53.166666666666664\\
120 54.0\\
130 57.33333333333333\\
140 59.33333333333333\\
150 53.333333333333336\\
160 53.0\\
170 58.33333333333333\\
180 56.49999999999999\\
190 52.16666666666667\\
200 53.99999999999999\\
210 52.0\\
220 56.99999999999999\\
230 56.33333333333332\\
240 53.166666666666664\\
250 54.0\\
260 54.50000000000001\\
270 51.16666666666667\\
280 53.166666666666664\\
290 54.50000000000001\\
300 53.0\\
310 52.5\\
320 53.99999999999999\\
330 54.666666666666664\\
340 53.33333333333334\\
350 56.333333333333336\\
360 55.00000000000001\\
370 52.33333333333333\\
380 54.833333333333336\\
390 54.333333333333336\\
400 55.833333333333336\\
410 55.49999999999999\\
420 54.50000000000001\\
430 54.16666666666667\\
440 56.50000000000001\\
450 55.16666666666667\\
460 57.49999999999999\\
470 53.833333333333336\\
480 57.49999999999999\\
490 54.83333333333334\\
500 57.666666666666664\\
};

\addplot [line width=0.4mm, GCNRNI50,mark size=1pt] %
table [row sep=\\]{%
10 51.16666666666667\\
20 51.5\\
30 53.666666666666664\\
40 50.66666666666666\\
50 52.166666666666664\\
60 51.16666666666667\\
70 54.49999999999999\\
80 53.83333333333334\\
90 53.49999999999999\\
100 53.833333333333336\\
110 54.33333333333332\\
120 55.666666666666664\\
130 57.333333333333336\\
140 58.166666666666664\\
150 57.00000000000001\\
160 59.5\\
170 58.50000000000001\\
180 64.0\\
190 63.16666666666666\\
200 65.83333333333333\\
210 73.16666666666667\\
220 81.33333333333333\\
230 86.16666666666667\\
240 89.16666666666666\\
250 93.66666666666667\\
260 93.66666666666666\\
270 94.33333333333334\\
280 94.16666666666667\\
290 95.83333333333333\\
300 95.33333333333334\\
310 97.16666666666667\\
320 96.83333333333331\\
330 98.00000000000001\\
340 98.66666666666667\\
350 98.16666666666667\\
360 98.83333333333333\\
370 97.83333333333331\\
380 98.66666666666667\\
390 99.33333333333334\\
400 99.16666666666669\\
410 99.16666666666669\\
420 98.83333333333333\\
430 99.16666666666666\\
440 99.16666666666666\\
450 99.49999999999999\\
460 99.33333333333334\\
470 98.83333333333333\\
480 99.0\\
490 97.33333333333334\\
500 99.33333333333334\\
};

\addplot [dotted, line width=0.6mm, GCNStandard, mark size=1pt]
table [row sep=\\]{%
0 50\\
10 50\\
20 50\\
30 50\\
40 50\\
50 50\\
60 50\\
70 50\\
80 50\\
90 50\\
100 50\\
110 50\\
120 50\\
130 50\\
140 50\\
150 50\\
160 50\\
170 50\\
180 50\\
190 50\\
200 50\\
210 50\\
220 50\\
230 50\\
240 50\\
250 50\\
260 50\\
270 50\\
280 50\\
290 50\\
300 50\\
310 50\\
320 50\\
330 50\\
340 50\\
350 50\\
360 50\\
370 50\\
380 50\\
390 50\\
400 50\\
410 50\\
420 50\\
430 50\\
440 50\\
450 50\\
460 50\\
470 50\\
480 50\\
490 50\\
500 50\\
};

\addplot [dotted, line width=0.6mm, GCNRNIFull, mark size=1pt]
table [row sep=\\]{%
10 50.66666666666666\\
20 51.0\\
30 50.66666666666666\\
40 52.5\\
50 52.66666666666667\\
60 55.666666666666664\\
70 51.16666666666666\\
80 57.16666666666666\\
90 56.49999999999999\\
100 56.99999999999999\\
110 56.666666666666664\\
120 61.5\\
130 67.83333333333333\\
140 69.16666666666667\\
150 72.16666666666669\\
160 76.16666666666666\\
170 79.5\\
180 78.16666666666666\\
190 79.83333333333333\\
200 81.66666666666667\\
210 82.99999999999999\\
220 85.0\\
230 84.00000000000001\\
240 85.83333333333334\\
250 82.33333333333333\\
260 87.16666666666667\\
270 88.33333333333334\\
280 88.16666666666666\\
290 88.83333333333333\\
300 88.83333333333333\\
310 88.66666666666664\\
320 87.16666666666667\\
330 89.0\\
340 86.33333333333333\\
350 91.83333333333333\\
360 89.66666666666666\\
370 92.66666666666666\\
380 90.16666666666666\\
390 90.66666666666666\\
400 90.0\\
410 90.16666666666666\\
420 92.83333333333333\\
430 91.16666666666666\\
440 92.83333333333333\\
450 90.99999999999999\\
460 90.99999999999999\\
470 93.33333333333333\\
480 92.83333333333331\\
490 92.16666666666666\\
500 90.66666666666666\\
};

\addplot [dotted, line width=0.6mm, GCNRNI50, mark size=1pt]
table [row sep=\\]{%
10 50.0\\
20 53.0\\
30 55.666666666666664\\
40 53.666666666666664\\
50 54.33333333333332\\
60 54.0\\
70 53.66666666666667\\
80 55.99999999999999\\
90 58.5\\
100 56.833333333333336\\
110 59.0\\
120 62.83333333333333\\
130 63.16666666666666\\
140 62.66666666666667\\
150 64.5\\
160 68.5\\
170 67.5\\
180 73.66666666666667\\
190 74.33333333333333\\
200 76.66666666666667\\
210 82.66666666666666\\
220 90.16666666666666\\
230 90.83333333333331\\
240 90.33333333333333\\
250 93.83333333333331\\
260 95.83333333333333\\
270 96.0\\
280 97.33333333333333\\
290 98.33333333333331\\
300 98.83333333333333\\
310 97.49999999999999\\
320 97.66666666666666\\
330 98.16666666666667\\
340 98.83333333333333\\
350 99.66666666666667\\
360 100.0\\
370 99.49999999999999\\
380 100.0\\
390 99.66666666666667\\
400 99.33333333333334\\
410 99.66666666666667\\
420 99.33333333333334\\
430 99.0\\
440 99.66666666666667\\
450 99.66666666666667\\
460 100.0\\
470 99.49999999999999\\
480 99.66666666666667\\
490 97.66666666666666\\
500 99.16666666666669\\
};

\end{axis}
\end{tikzpicture}
		\caption{Learning curves on \EXPTwo (/E) and \Corrupt (/C).}
		\label{app:fig:modelConvergenceSplit}
	\end{subfigure}%
	\break
	\begin{subfigure}{.48\textwidth}
		\centering
		\input{fig/fig_coreplusSparse}
		\caption{Learning curves on \SparseCEXP.}
		\label{app:fig:modelConvergencePlusSparse}
	\end{subfigure}%
	\begin{subfigure}{.48\textwidth}
		\centering
		\input{fig/fig_LRNEXPSparse}
		\caption{Learning curves on\EXPTwo (/E) and \Corrupt (/C).}
		\label{app:fig:modelConvergenceSplitSparse}
	\end{subfigure}%
	\caption{Model convergence results for Experiment 2 on \CEXP and \SparseCEXP.}
		\label{app:fig:res}
\end{figure*}

Analogously to Experiment 1, we generate a \SparseCEXP dataset similarly to \CEXP, but only generate 150 graph pairs. Then, we select 75 graph pairs and modify them, as described in \Cref{app:CExpConstruction}. We report the learning curves for all models on \SparseCEXP, as well as the original curves for \CEXP from the main body, in \Cref{app:fig:res}.

As in the previous subsection, similar behavior is observed on \SparseCEXP compared with \CEXP, only differing by slower convergence in the former case. However, we note that the ``struggle'' phase described in the main paper, which only occurs during the first 100 epochs over \CEXP, lasts for around 500 epochs on \SparseEXP. Intuitively, this ``struggle'' phenomenon is due to conflicting learning requirements, stemming from \Corrupt and \EXPTwo, which effectively require models to ``isolate''  deterministic dimensions  for \Corrupt, and other randomized dimensions for \EXPTwo. This in itself is already challenging on \CEXP, but is made even more difficult on \SparseCEXP due to its sparsity. Indeed, sparsity makes that further samples are needed in expectation to find a reasonable solution, leading to a lengthy ``struggle'' phase, in which both \Corrupt and \EXPTwo data points conflict with one another during optimization.

\subsection{Hyper-parameter details}
All GCN models with (partially or completely) deterministic initial node embeddings map a 2-dimensional one-hot encoding of node type (literal or disjunction) to a $k$-dimensional embedding space, where $k$ corresponds to the dimensionality of the deterministic embeddings. Furthermore, the final prediction for every graph is computed by aggregating all node embeddings using the $\max$ function, and then passing the result through a multi-layer perceptron (MLP) of 3 layers with dimensionality $x$, 32 and 2 respectively, where $x$ is the embedding dimensionality used in the given model. The activation function for the first two MLP layers is the ELU function \cite{ClevertUH15}, and the softmax function is used to make a final prediction at the final MLP layer.

All neural networks in this work are optimized using the Adam optimizer \cite{Kingma-ICLR2014}. All training is conducted with a fixed learning rate $\lambda$, for fairer comparison between all models. Initially, decaying learning rates were used, but these were discarded, as they yielded sub-optimal convergence for all \GCNRNI~models. Finally, all experiments were run on a V100 GPU. Detailed hyper-parameters, namely learning rate $\lambda$ and RNI distribution $p$, per model on every evaluation dataset are shown in \Cref{tab:hyperParams}.

\begin{table}[t!]
	\centering
	\caption{Hyper-parameter configurations for all experiments.} 
	\begin{tabular}{lcccc}
		\toprule 
		Dataset & \multicolumn{2}{c}{\EXP} & \multicolumn{2}{c}{\CEXP}\\
		  & $\lambda$ & $p$ & $\lambda$ & $p$\\
		 \cmidrule(r){2-3}
		 \cmidrule(r){4-5}
		 GCN & $1\times10^{-4}$ & N/A & $1\times10^{-4}$ & N/A \\
		 GCN-12.5\%RNI & $2\times10^{-4}$ & N & $2\times10^{-4}$ & N \\
		 GCN-50\%RNI & $2\times10^{-4}$ & N & $2\times10^{-4}$ & N \\
		 GCN-87.5\%RNI & $2\times10^{-4}$ & N & $5\times10^{-4}$ & N \\
		 \GCNRNI & $5\times10^{-4}$ & N & $5\times10^{-4}$ & N \\
		 \ThreeGNNFull & $5\times10^{-4}$ & N/A & $2\times10^{-4}$ & N/A \\
		\bottomrule
	\end{tabular}
	\label{tab:hyperParams}
\end{table}

\subsubsection{Results for \GCNRNI~with hyperbolic tangent activation}

\begin{table}[t!]
\centering
\caption{Performance of \GCNRNI on \EXP dataset with tanh.}
\begin{tabular}{lHc} 
\toprule
Model & Training Accuracy (\%) & Testing  Accuracy (\%)\\
\midrule
 \GCNRNI(U) & 93.0 $\pm$ 6.32 & 92.7 $\pm$ 5.61 \\ 
\textbf{\GCNRNI(N)} & \textbf{95.7 $\pm$ 2.64} & \textbf{96.0 $\pm$ 2.11} \\ 
\GCNRNI(XU) & 65.1 $\pm$ 20.6 & 64.6 $\pm$ 19.9 \\ 
\GCNRNI(XN) & 63.2 $\pm$ 21.2 & 63.0 $\pm$ 20.9 \\ 
\bottomrule
\end{tabular}
\label{tab:exp1Tanh}
\end{table}

In addition to experimenting with the RNI probability distribution, we also experimented with different activation functions for the GCN message passing iterations. Results are shown in \Cref{tab:exp1Tanh}. Performance with $tanh$ is significantly more variable across distributions than ELU, which shows that RNI is highly sensitive to  choices of hyper-parameters.

\end{document}